\pdfoutput=1
\documentclass{article}

\PassOptionsToPackage{numbers,compress}{natbib}
\usepackage[final,main]{style/neurips_2026}
\usepackage{multirow}
\usepackage{pifont}

\usepackage{multirow}

\usepackage[utf8]{inputenc}
\usepackage[T1]{fontenc}
\usepackage{adjustbox}

\usepackage{url}\usepackage{booktabs}
\usepackage{wrapfig}
\usepackage{amsfonts}
\usepackage{nicefrac}
\usepackage{silence}
\usepackage[nopatch={footnote,showhyphens}]{microtype}
\usepackage{amsmath}
\usepackage{amssymb}
\usepackage{mathtools}
\usepackage{slashed}
\usepackage{braket}
\usepackage{enumitem}
\usepackage{bbm}
\usepackage[svgnames]{xcolor}
\usepackage[colorlinks,citecolor=DarkGreen,linkcolor=FireBrick,urlcolor=FireBrick,linktocpage=true,unicode,hypertexnames=false]{hyperref}
\usepackage{multirow}
\usepackage{graphicx}
\graphicspath{{figures/}}
\makeatletter
\@ifundefined{PRISM@MACROS@LOADED}{\gdef\PRISM@MACROS@LOADED{}}{ }
\makeatother

\providecommand{\PaperTitle}{D-DOIT: Training-free Adaptation of Discrete Diffusion via Doob's \textnormal{\itshape h}-Transform}

\providecommand{\StateSymbol}{\ensuremath{\bar{\bX}_{t_l}}}

\providecommand{\MainResult}{\ensuremath{6.19}}

\providecommand{\MainResultUncertainty}{\ensuremath{0.05}}

\usepackage{graphicx}
\usepackage{tikz}
\usepackage{tikz-cd}
\usepackage{bm}
\usepackage{physics}
\usepackage{xcolor}
\usepackage{natbib}
\usepackage{mdframed}
\usepackage{nicefrac}
\usepackage{booktabs}
\usepackage{multirow}
\usepackage{lipsum}
\usepackage{wrapfig,lipsum,booktabs}
\usepackage{placeins}
\usepackage{needspace}
\usepackage{blindtext}
\usepackage[ruled]{algorithm}
\usepackage{algorithmic}

\usepackage{titletoc}
\usepackage{todonotes}
\usepackage{titletoc}
\usepackage{setspace}
\usepackage{dsfont}
\usepackage{adjustbox}
\usepackage{longtable, lscape}
\usepackage{listings}

\definecolor{codegreen}{rgb}{0,0.6,0}
\definecolor{codegray}{rgb}{0.5,0.5,0.5}
\definecolor{codepurple}{rgb}{0.58,0,0.82}
\definecolor{backcolour}{rgb}{0.97,0.97,0.97}
\lstdefinestyle{pytorchstyle}{
  backgroundcolor=\color{backcolour},
  commentstyle=\color{codegreen}\itshape,
  keywordstyle=\color{magenta}\bfseries,
  numberstyle=\tiny\color{codegray},
  stringstyle=\color{codepurple},
  basicstyle=\ttfamily\footnotesize,
  breakatwhitespace=false,
  breaklines=true,
  captionpos=b,
  keepspaces=true,
  numbers=left,
  numbersep=5pt,
  showspaces=false,
  showstringspaces=false,
  showtabs=false,
  tabsize=4,
  xleftmargin=10pt,
  xrightmargin=10pt,
  frame=single,
  rulecolor=\color{black!20}
}
\usepackage{xspace}
\usepackage[capitalize,noabbrev]{cleveref}
\usepackage{CJK}
\usepackage{array}
\usepackage{makecell}

\newcommand{\bea}{
\begin{eqnarray}}
  \newcommand{\eea}{
\end{eqnarray}}

\newcommand{\lG}{\lambda}

\usepackage{slashed}

\def\({\left(}
\def\){\right)}
\def\[{\left[}
\def\]{\right]}

\usepackage{dashbox}
\usepackage{xcolor}
\usepackage{colortbl}
\definecolor{lightyellow}{rgb}{1.0, 0.95, 0.7}
\definecolor{Blue}{rgb}{0, 0, 0.8}
\definecolor{blue}{rgb}{0,0,1}
\definecolor{darkgreen}{rgb}{0,0.40,0}
\definecolor{firebrick}{rgb}{0.698,0.133,0.133}

\definecolor{colorA}{rgb}{1,0,0}
\definecolor{colorB}{rgb}{0,0.3,1}
\definecolor{colorC}{rgb}{0.9,0.8,0.2}
\definecolor{colorD}{rgb}{0,0.65,0}
\definecolor{lesslightgray}{rgb}{0.5,0.5,0.5}
\definecolor{light-gray}{gray}{0.95}

\newcommand{\calC}{\mathcal{C}}

\newcommand{\calG}{\mathcal{G}}

\newcommand{\calV}{\mathcal{V}}

\newcommand{\calX}{\mathcal{X}}

\newcommand{\bW}{\mathbf{W}}
\newcommand{\bX}{\mathbf{X}}

\newcommand{\bx}{\mathbf{x}}
\newcommand{\by}{\mathbf{y}}

\newcommand{\argmax}{\mathop{\mathrm{argmax}}}

\newcommand{\NFE}{\mathrm{NFE}}

\newcommand{\spl}{\mathrm{split}}
\newcommand{\eend}{\mathrm{end}}

\def\R{\mathbb{R}}

\def\E{\mathbb{E}}

\let\cite\citep
\usepackage{amsthm}
\usepackage[most]{tcolorbox}

\makeatletter
\def\th@remark{%
  \thm@headfont{\bfseries}%
  \normalfont % body font
  \thm@preskip\topsep \divide\thm@preskip\tw@
  \thm@postskip\thm@preskip
}
\makeatother
\theoremstyle{definition}
\newtheorem{theorem}{Theorem}[section]
\tcolorboxenvironment{theorem}{
  breakable,
  colback=black!10,
  colframe=white,%
  width=\dimexpr\linewidth+10pt\relax,%
  enlarge left by=-5pt,%
  enlarge right by=-5pt,%
  boxsep=5pt,%
  boxrule=0pt,
  left=0pt,right=0pt,top=0pt,bottom=0pt,
  sharp corners,
  before skip=\topsep,
  after skip=\topsep
}

\tcolorboxenvironment{lemma}{
  breakable,
  colback=black!10,
  colframe=white,%
  width=\dimexpr\linewidth+10pt\relax,%
  enlarge left by=-5pt,%
  enlarge right by=-5pt,%
  boxsep=5pt,%
  boxrule=0pt,
  left=0pt,right=0pt,top=0pt,bottom=0pt,
  sharp corners,
  before skip=\topsep,
  after skip=\topsep
}

\tcolorboxenvironment{corollary}{
  breakable,
  colback=black!10,
  colframe=white,%
  width=\dimexpr\linewidth+10pt\relax,%
  enlarge left by=-5pt,%
  enlarge right by=-5pt,%
  boxsep=5pt,%
  boxrule=0pt,
  left=0pt,right=0pt,top=0pt,bottom=0pt,
  sharp corners,
  before skip=\topsep,
  after skip=\topsep
}
\newtheorem{proposition}{Proposition}[section]
\tcolorboxenvironment{proposition}{
  breakable,
  colback=black!10,
  colframe=white,%
  width=\dimexpr\linewidth+10pt\relax,%
  enlarge left by=-5pt,%
  enlarge right by=-5pt,%
  boxsep=5pt,%
  boxrule=0pt,
  left=0pt,right=0pt,top=0pt,bottom=0pt,
  sharp corners,
  before skip=\topsep,
  after skip=\topsep
}
\theoremstyle{definition}
\newtheorem{definition}{Definition}[section]
\tcolorboxenvironment{definition}{
  breakable,
  colback=black!10,
  colframe=white,%
  width=\dimexpr\linewidth+10pt\relax,%
  enlarge left by=-5pt,%
  enlarge right by=-5pt,%
  boxsep=5pt,%
  boxrule=0pt,
  left=0pt,right=0pt,top=0pt,bottom=0pt,
  sharp corners,
  before skip=\topsep,
  after skip=\topsep
}
\theoremstyle{remark}
\newtheorem{remark}{Remark}[section]

\tcolorboxenvironment{assumption}{
  breakable,
  colback=black!10,
  colframe=white,%
  width=\dimexpr\linewidth+10pt\relax,%
  enlarge left by=-5pt,%
  enlarge right by=-5pt,%
  boxsep=5pt,%
  boxrule=0pt,
  left=0pt,right=0pt,top=0pt,bottom=0pt,
  sharp corners,
  before skip=\topsep,
  after skip=\topsep
}

\tcolorboxenvironment{problem}{
  breakable,
  colback=black!10,
  colframe=white,%
  width=\dimexpr\linewidth+10pt\relax,%
  enlarge left by=-5pt,%
  enlarge right by=-5pt,%
  boxsep=5pt,%
  boxrule=0pt,
  left=0pt,right=0pt,top=0pt,bottom=0pt,
  sharp corners,
  before skip=\topsep,
  after skip=\topsep
}

\crefname{theorem}{Theorem}{Theorems}
\crefname{proposition}{Proposition}{Propositions}
\crefname{lemma}{Lemma}{Lemmas}
\crefname{corollary}{Corollary}{Corollaries}
\crefname{definition}{Definition}{Definitions}
\crefname{assumption}{Assumption}{Assumptions}
\crefname{remark}{Remark}{Remarks}
\crefname{problem}{Problem}{Problems}
\crefname{property}{Property}{property}

\numberwithin{equation}{section}
\numberwithin{theorem}{section}
\numberwithin{proposition}{section}
\numberwithin{definition}{section}
\numberwithin{lemma}{section}
\numberwithin{assumption}{section}
\numberwithin{remark}{section}

\usepackage{lipsum}

\makeatletter
\@ifundefined{widebar}{%
  \let\save@mathaccent\mathaccent
  \newcommand*\if@single[3]{%
    \setbox0\hbox{${\mathaccent"0362{#1}}^H$}%
    \setbox2\hbox{${\mathaccent"0362{\kern0pt#1}}^H$}%
    \ifdim\ht0=\ht2 #3\else #2\fi
  }
  \newcommand*\rel@kern[1]{\kern#1\dimexpr\macc@kerna}
  \newcommand*\widebar[1]{\@ifnextchar^{{\wide@bar{#1}{0}}}{\wide@bar{#1}{1}}}
  \newcommand*\wide@bar[2]{\if@single{#1}{\wide@bar@{#1}{#2}{1}}{\wide@bar@{#1}{#2}{2}}}
  \newcommand*\wide@bar@[3]{%
    \begingroup
    \def\mathaccent##1##2{%
      \let\mathaccent\save@mathaccent
      \if#32 \let\macc@nucleus\first@char \fi
      \setbox\z@\hbox{$\macc@style{\macc@nucleus}_{}$}%
      \setbox\tw@\hbox{$\macc@style{\macc@nucleus}{}_{}$}%
      \dimen@\wd\tw@
      \advance\dimen@-\wd\z@
      \divide\dimen@ 3
      \@tempdima\wd\tw@
      \advance\@tempdima-\scriptspace
      \divide\@tempdima 10
      \advance\dimen@-\@tempdima
      \ifdim\dimen@>\z@ \dimen@0pt\fi
      \rel@kern{0.6}\kern-\dimen@
      \if#31
      \overline{\rel@kern{-0.6}\kern\dimen@\macc@nucleus\rel@kern{0.4}\kern\dimen@}%
      \advance\dimen@0.4\dimexpr\macc@kerna
      \let\final@kern#2%
      \ifdim\dimen@<\z@ \let\final@kern1\fi
      \if\final@kern1 \kern-\dimen@\fi
      \else
      \overline{\rel@kern{-0.6}\kern\dimen@#1}%
      \fi
    }%
    \macc@depth\@ne
    \let\math@bgroup\@empty \let\math@egroup\macc@set@skewchar
    \mathsurround\z@ \frozen@everymath{\mathgroup\macc@group\relax}%
    \macc@set@skewchar\relax
    \let\mathaccentV\macc@nested@a
    \if#31
    \macc@nested@a\relax111{#1}%
    \else
    \def\gobble@till@marker##1\endmarker{}%
    \futurelet\first@char\gobble@till@marker#1\endmarker
    \ifcat\noexpand\first@char A\else
    \def\first@char{}%
    \fi
    \macc@nested@a\relax111{\first@char}%
    \fi
    \endgroup
  }
}{}
\makeatother

\makeatletter
\newcommand*{\redefinesymbolwitharg}[1]{%
  \expandafter\let\csname ltx#1\expandafter\endcsname\csname #1\endcsname
  \@namedef{#1}{\@ifnextchar{^}{\@nameuse{#1@}}{\@nameuse{#1@}^{}}}%
  \expandafter\def\csname #1@\endcsname^##1##2{%
    \csname ltx#1\endcsname\ifx!##1!\else^{##1}\fi\mathopen{}\mathclose\bgroup\left(##2\aftergroup\egroup\right)
  }%
}
\makeatother
\redefinesymbolwitharg{sin}
\redefinesymbolwitharg{cos}
\title{\PaperTitle}
\author{%
  Jieke Wu\thanks{Equal contribution.}$^{\:\;\ddag}$, \quad
    Qijie Zhu$^{*\S}$, \quad
    Weimin Wu$^{*\dagger}$, \quad
    Zeqi Ye$^{\natural}$, \quad
    Minshuo Chen$^{\natural}$, \quad
    Han Liu$^{\dagger \S}$
  \\ \vspace{0.5em}
  {\small
  \begin{tabular}{@{}c@{}}
    $^\dagger\;$Center for Foundation Models and Generative AI, Northwestern University\\
    Department of Computer Science, Northwestern University\\
    $^\ddag\;$Department of Computer Science, King Abdullah University of Science and Technology\\
    $^\S\;$Department of Statistics and Data Science, Northwestern University\\
    $^\natural\;$Industrial Engineering \& Management Sciences, Northwestern University
  \end{tabular}
  }\\
  {\small
  \begin{tabular}{@{}c@{}}
    \texttt{\href{mailto:jieke.wu@kaust.edu.sa}{jieke.wu@kaust.edu.sa}},
    \texttt{\{\href{mailto:qijiezhu2029@u.northwestern.edu}{qijiezhu2029},\href{mailto:wwm@u.northwestern.edu}{wwm},\href{mailto:zeqiye2029@u.northwestern.edu}{zeqiye2029}\}@u.northwestern.edu}\\
    \texttt{\href{mailto:minshuo.chen@northwestern.edu}{minshuo.chen@northwestern.edu}}, \texttt{\href{mailto:hanliu@northwestern.edu}{hanliu@northwestern.edu}}
  \end{tabular}
  } \\
}

\def\Snospace~{\S{}}

\newcommand{\sref}[2]{\hyperref[#2]{#1 \ref{#2}}}

\begin{document}
% Give tables and figures enough air for the camera-ready layout.
\setlength{\textfloatsep}{11pt plus 2pt minus 3pt}
\setlength{\floatsep}{9pt plus 2pt minus 2pt}
\setlength{\intextsep}{9pt plus 2pt minus 2pt}
\setlength{\abovecaptionskip}{6pt}
\setlength{\belowcaptionskip}{6pt}

\maketitle

\begin{abstract}
% Reward-guided adaptation is essential for applying discrete diffusion models to
% sequence design, but existing methods either update model parameters, rely on
% differentiable rewards, or incur substantial inference cost through large
% particle populations or search procedures.
We propose D-DOIT (Discrete Doob-Oriented Inference-time Transformation), a training-free and efficient adaptation method for discrete diffusion models with generic rewards.
D-DOIT formulates adaptation as sampling from a reward-tilted target distribution and realizes this transport through Doob's $h$-transform of the discrete diffusion reverse kernel, using only reward values rather than reward gradients.
Unlike continuous diffusion, masked discrete diffusion samples categorical token-reveal transitions rather than continuous state updates.
D-DOIT derives the corresponding discrete Doob's $h$-transform, which guides sampling by reweighting reverse transition probabilities instead of adding a drift correction.
To make this transformation practical, D-DOIT avoids expensive future rollouts.
At each guided step, D-DOIT samples candidate next states, uses the model prediction head to complete each candidate into a clean sequence, evaluates each completion with the reward oracle, and resamples the next state with probabilities proportional to the rewards.
An optional late-stage best-of-$K$ refinement further improves sample quality by branching trajectories only near the end of denoising, avoiding the $K$-fold cost over the full trajectory.
Empirically, across regulatory DNA design and protein inverse folding benchmarks, D-DOIT outperforms training-free guidance baselines.
It improves enhancer activity and cell-type specificity while preserving sequence naturalness, and achieves the highest success rate in protein inverse folding.

{\bf Project Page:}
\href{https://github.com/WeiminWu2000/D-DOIT}{https://github.com/WeiminWu2000/D-DOIT}.

\end{abstract}

\section{Introduction}
\label{sec:intro}
We propose D-DOIT (\textbf{D}iscrete \textbf{D}oob-\textbf{O}riented
\textbf{I}nference-time \textbf{T}ransformation), a training-free sampler for
reward-guided masked discrete diffusion. Discrete diffusion has become a strong
generative modeling paradigm for finite-vocabulary data
\citep{austin2021structured,campbell2022continuous}. Masked discrete diffusion
models are especially attractive for biological sequences because they predict
clean tokens from partially masked states and provide tractable categorical
reverse kernels \citep{sahoo2024simple,shi2024simplified,lou2024sedd}. These
models have recently been used for DNA and protein generation
\citep{stark2024dirichlet,sarkar2024designing,wang2025drakes}, where the
scientific objective is rarely just to sample from the pre-trained distribution.
Instead, a practitioner usually wants sequences with high activity, specificity,
or other task-level scores for sequence design.

Reward adaptation creates a tension between performance, generality, and cost.
Training-based approaches can improve reward optimization, but they update the
backbone and often require reward gradients, extra training, and hyperparameter
tuning \citep{wang2025drakes,han2025trajectory}. Training-free
methods keep the backbone fixed, but many either assume differentiable rewards
or rely on particle and search procedures whose cost grows with trajectory
history or search width
\citep{nisonoff2024unlocking,wu2024practical,li2025derivativefree,uehara2025reward,schiff2024simple,rectorbrooks2024ddpp,frey2024plugplay,singhal2025general}.
This motivates the question studied here: can a frozen masked diffusion model be
adapted to generic black-box rewards by modifying only its inference-time
sampling law?

Our starting point is to view adaptation as a distributional transport problem.
Let $p_\theta$ be the clean-sequence distribution induced by the frozen masked
diffusion reverse chain, and let $R:\calX\to\mathbb R$ be a sequence-level
reward. Rather than optimizing reward in isolation, we target the reward-tilted
distribution $\pi_\beta(\bx)\propto p_\theta(\bx)\exp(R(\bx)/\beta)$, which
keeps the pre-trained model as a sequence prior while favoring high-reward
outputs. Doob's $h$-transform gives the exact local transformation needed to
sample from this tilted law. The central object is the downstream value
$h(\bar\bx_{t_l},t_l)$, the expected terminal reward weight under the frozen
reverse chain conditioned on the current partially denoised state.

The discrete setting changes how the transform must be implemented. In
continuous diffusion, Doob guidance can be expressed as an additive score or
drift correction \citep{nguyen2025h,zhu2026trainingfreeadaptationdiffusionmodels,heng2021simulating,denker2024deft}. In masked discrete diffusion, the
state space is categorical, generation proceeds through irreversible
token-reveal transitions, and there is no Euclidean score field. D-DOIT
therefore applies Doob's $h$-transform at the kernel level:
each frozen reverse transition is multiplicatively reweighted by the downstream
$h$-value of the candidate next state. This yields a discrete analogue of the
continuous Doob correction while preserving the frozen parameters and allowing
black-box reward queries.

The exact transform still requires the future reward-to-go for every candidate
reverse state, which is intractable by repeated rollouts. D-DOIT
makes the transform executable with two approximations. First, it estimates
candidate $h$-values by using the masked diffusion prediction head to complete
each candidate into a clean sequence, then evaluating the reward oracle on
that completion. Second, it draws a finite set of candidate reverse states and
selects among them with Doob-weighted probabilities. An optional best-of-$K$
refinement branches only after a chosen split update, which hedges irreversible
token reveals without multiplying the full denoising trajectory cost. The same
sampler gives two operating points: D-DOIT-LBOK-S for a lower-budget regime and
D-DOIT-Q for high-budget quality evidence.

% Because these components replace the
% true rollout value and the exact kernel, their temperatures, sharpening,
% guidance window, and terminal selection define a practical operating point.
% This operating point does not inherit the exact-target guarantee of the
% reference transform.

We evaluate D-DOIT on regulatory DNA sequence design and protein inverse
folding. On the DRAKES HepG2 enhancer benchmark, the near-TDS-speed
D-DOIT-LBOK-S matches TDS on predicted activity while substantially improving
accessibility, 3-mer correlation, and motif correlation. The high-budget
D-DOIT-Q setting reaches $\MainResult$ predicted activity and exceeds training-based
DRAKES on Pred-Activity, 3-mer Corr, and JASPAR Corr, while remaining slightly
lower on ATAC-Acc. At a similar wall-clock budget, D-DOIT-Q also improves
activity, accessibility, and 3-mer correlation over soft value-based decoding
(SVDD-PM) and wall-clock-matched endpoint search. Endpoint search
achieves higher JASPAR correlation. On the Ctrl-DNA cell-type-specific enhancer
benchmark, D-DOIT is a clear training-free method that moves
the specificity reward. D-DOIT-Q raises $\Delta R$ to $0.119$, with a trade-off
in motif preservation, and D-DOIT-LBOK-S provides a lower-budget positive-reward
point. At a matched wall-clock budget, however, endpoint search captures much of
this gain, so the margin is smaller than on HepG2.
Guidance with alternative predictors and transfer to held-out evaluators retain
the gains over unguided sampling. On protein inverse folding, the same frozen
backbone reaches $68.1\%$ joint stability--structure success. These results
support D-DOIT as a training-free mechanism for reward-guided sequence
generation, not as a matched-cost replacement for every high-budget or
training-based method.

In summary, we make three contributions:
\begin{itemize}[leftmargin=1em, topsep=0.3em, itemsep=0.5em, parsep=0pt, partopsep=0pt]
  \item \textbf{Discrete Doob $h$-transform for masked diffusion.}
    We formulate reward adaptation as sampling from a reward-tilted target
    distribution and derive the corresponding Doob $h$-transform of the masked
    discrete diffusion reverse kernel. The resulting guidance reweights
    categorical token-reveal transitions, in contrast to the additive drift or
    score correction used in continuous diffusion. The exact reference transform
    is kept separate from the executable approximations built on it.
  \item \textbf{Executable finite-candidate sampler.}
    We make the exact transform practical with a prediction-head $h$-estimator,
    finite-candidate Doob-weighted selection, and an optional best-of-$K$
    refinement for irreversible reveal decisions. The method uses only reward
    values and does not update the diffusion backbone. We characterize how
    tempering, sharpening, and terminal selection deviate from the exact target.
  \item \textbf{Biological validation and trade-offs.}
    We test the same frozen-backbone sampler on HepG2 enhancer design,
    cell-type-specific enhancer design, and protein inverse folding. D-DOIT
    improves over training-free baselines across the reported reward-guided
    design tasks, while preserving the relevant trade-offs in naturalness, motif
    structure, and inference budget.
    % Nearest-baseline and oracle-transfer
    % controls support the reported gains without implying universal dominance.
\end{itemize}

\paragraph{Organization.}
\Cref{sec:prelim} reviews the masked discrete diffusion backbone, defines the
reward-tilted target, and states the exact discrete Doob $h$-transform.
\Cref{sec:method} develops the practical D-DOIT sampler, including the
prediction-head $h$-estimator, finite-candidate weighted update, and optional
best-of-$K$ refinement. \Cref{sec:exp} reports experiments on regulatory DNA
design, cell-type-specific enhancer design, and protein inverse folding. The
appendix provides related work (\cref{app:related}), the proof of the discrete
Doob $h$-transform (\cref{app:proof-discrete-doob-htransform}), the standard masked
diffusion sampler (\cref{sec:mdlm-baseline}), and full experimental details
(\cref{app:exp:drakes,app:exp:ctrldna,app:exp:protein}).

\paragraph{Notation.}
We write $\calV=\{A,G,T,C\}$ for the clean token alphabet, $d=|\calV|$, and
$\calX=\calV^N$ for length-$N$ DNA sequences. Let $\mathsf m$ be the mask
token, $\widetilde{\calV}=\calV\cup\{\mathsf m\}$, and
$\widetilde{\calX}=\widetilde{\calV}^N$. We use $\bX_{t_l}$ for the forward
process and $\StateSymbol$ for the learned reverse chain, which runs from the
all-\textsc{Mask} state $\bar\bx_{t_L}$ to a clean sequence $\bar\bx_0$. We use
$q$ for the forward corruption kernel and analytic posterior, $P_{\theta,t_l}$
for the learned reverse kernel, $p_{\theta,t_l}$ for the reverse-chain marginal,
and $R:\calX\to\R$ with $\phi(\bx_0)=\exp(R(\bx_0)/\beta)$ for the black-box
reward and reward weight. For the sampler, $M$ denotes the local candidate
budget, $n_h$ the number of completions per candidate, $\tau$ and $\gamma$
control proposal and completion tempering, $\omega$ sharpens value-based
selection, and $K$ denotes the best-of-$K$ branching width.

\section{Preliminaries}
\label{sec:prelim}

We first define the frozen masked diffusion reverse chain that serves as the
base law for D-DOIT. We then state the reward-tilted target distribution and
the discrete Doob $h$-transform that transports this base law to the target
without changing the model parameters \citep{rogers2000diffusions,sarkka2019applied}.
We keep the exact reference transform separate from the executable
approximation built on it, which \cref{sec:method} develops.

\subsection{Masked Discrete Diffusion Models}
\label{sec:background-masked-discrete-diffusion}

We specialize to masked discrete diffusion. Let $\calV$ be the clean token
alphabet, $d=|\calV|$, and $\mathsf m$ a special mask token. Define
$\widetilde{\calV}:=\calV\cup\{\mathsf m\}$, with sequence spaces
$\calX=\calV^N$ and $\widetilde{\calX}=\widetilde{\calV}^N$. The data
distribution is supported on $\calX$, while the forward process evolves on
$\widetilde{\calX}$ by progressively replacing clean tokens with $\mathsf m$.
On the grid $t_l=lT/L$, let $b_l\in[0,1]$ be the masking probability on
$[t_{l-1},t_l]$. The single-token forward kernel is
\begin{align*}
  q_{t_{l-1},t_l}(y\mid x)
  =
  (1-b_l)\mathbf 1\{y=x\}
  +
  b_l\mathbf 1\{y=\mathsf m\},
  \quad x,y\in\widetilde{\calV},
\end{align*}
so with probability $1-b_l$ the token stays fixed and otherwise it jumps to the
mask state; once masked it stays masked. With
$\alpha_l=\prod_{r=1}^{l}(1-b_r)$, a clean token $x_0\in\calV$ satisfies
\begin{align*}
  q_{0,t_l}(y\mid x_0)
  =\mathbb{P}(X_{t_l}=y\mid X_0=x_0)
  =\alpha_l\mathbf 1\{y=x_0\}
  +
  (1-\alpha_l)\mathbf 1\{y=\mathsf m\}.
\end{align*}
We choose $\alpha_L\approx 0$ for a sufficiently large $T$, so that $q_{0,T}$
is concentrated near the mask state. The forward masking process is applied
independently across coordinates. The conditional one-step time reversal also
has a closed form, and \cref{app:masked-diffusion-details} records the forward
transition matrices and the analytic reverse kernel that the learned sampler
approximates.

Since $\bx_0$ is unknown during generation, masked diffusion models
approximate the clean-token posterior with a trained neural network.
For each coordinate $j$, let
$\mu_\theta^j(\cdot\mid \bx_{t_l},t_l)\in \Delta(\calV)$ denote the neural
network's approximation to
$\mathbb P(X_0^j=\cdot \mid \mathbf X_{t_l}=\bx_{t_l})$.
Plugging this approximation into the analytic reverse posterior gives the
learned one-step reverse transition from $t_l$ to $t_{l-1}$:
\begin{align}
  P_{\theta,t_l}(x_{t_{l-1}}^j=y\mid \bx_{t_l})
  =
  \begin{cases}
    (1-\alpha_{l-1})/(1-\alpha_l),
    & x_{t_l}^j=\mathsf m,\ y=\mathsf m,\\
    (\alpha_{l-1}-\alpha_l)/(1-\alpha_l)\,
    \mu_\theta^j(y\mid \bx_{t_l},t_l),
    & x_{t_l}^j=\mathsf m,\ y\in\calV,\\
    \mathbf 1\{y=x_{t_l}^j\},
    & x_{t_l}^j\neq\mathsf m.
  \end{cases}
  \label{eq:mdlm-transition}
\end{align}
Finally, the learned reverse model factorizes over coordinates,
\begin{align*}
  P_{\theta,t_l}(\bx_{t_{l-1}}\mid \bx_{t_l})
  &=
  \prod_{j=1}^N
  P_{\theta,t_l}(x_{t_{l-1}}^j\mid \bx_{t_l}).
\end{align*}
Starting from an initial masked state $\bar{\bX}_{t_L}$, we iteratively sample
$\bar{\bX}_{t_{l-1}}\sim P_{\theta,t_l}(\cdot\mid \StateSymbol)$ from
$l=L$ to $l=1$ and obtain the final clean sample $\bar{\bX}_0$.
Let $p_{\theta,t_l}$ denote the marginal law of $\StateSymbol$ under this
reverse chain, and write $p_\theta=p_{\theta,0}$ for the clean-time law.
This learned reverse chain is the frozen base sampler. D-DOIT leaves its
parameters fixed and modifies only its reverse transition probabilities at
inference time. We next define the target these modified transitions should
realize.

\subsection{Reward-tilted Target and Doob's \texorpdfstring{$h$}{h}-transform}
\label{sec:htransform-discrete}

We now formalize the inference-time adaptation objective. Given a reward
function $R:\calX\to\mathbb R$ and a temperature parameter $\beta>0$, define
the positive reward weight
\begin{align}
  \label{eq:reward-weight}
  \phi(\bx_0)=\exp(R(\bx_0)/\beta).
\end{align}
The desired clean-time target is the reward-tilted version of the frozen model
law,
\begin{align}
  \label{eq:tilted-target}
  \pi_\beta(\bx_0)
  &=
  \frac{
    p_\theta(\bx_0)\phi(\bx_0)
  }{
    Z_\beta
  },\quad
  Z_\beta=
  \E_{\bX_0\sim p_\theta}
  \left[
    \phi(\bX_0)
  \right].
\end{align}
Thus adaptation is a distributional problem: change the terminal law from
$p_\theta$ to $\pi_\beta$ while preserving the frozen reverse chain as the base
proposal. The discrete Doob $h$-function is the object that makes this
transport local in time and gives a kernel-level correction.

\begin{definition}[Discrete Doob $h$-function]
  \label{def:discrete-doob-h-function}
  For the frozen reverse chain defined above, the discrete Doob $h$-function is
  the downstream reward weight conditioned on a partially denoised state:
  \begin{align}
    \label{eq:prelim-discrete-doob-h-function}
    h(\bar \bx_{t_l},t_l)
    =
    \E_{p_\theta}
    \left[
      \exp(R(\bX_0)/\beta)
      \mid
      \StateSymbol=\bar \bx_{t_l}
    \right],
    \qquad l=0,\ldots,L.
  \end{align}
  At clean time, this reduces to $h(\bx_0,0)=\phi(\bx_0)$. At an intermediate
  time, $h(\bar\bx_{t_l},t_l)$ is the reward-to-go under the frozen reverse
  chain. It therefore measures how promising the current partial sequence is
  for the terminal reward objective at clean time under the frozen process.
\end{definition}

The next theorem states the exact kernel transform induced by this value
function. It is the discrete analogue of the continuous Doob correction, but it
acts by reweighting transition probabilities rather than by adding a vector
field to a continuous score.

\begin{theorem}[Discrete Doob's $h$-transform]
  \label{thm:discrete-doob-h-transform}
  In the fully masked terminal setting, $\bar \bX_{t_L}$ is deterministic, so
  $h(\bar \bX_{t_L},t_L)=Z_\beta$ and the terminal distribution is unchanged.
  Let $h$ be defined as in \cref{def:discrete-doob-h-function}. Then Doob's
  $h$-transform induces the reverse kernel
  \begin{align}
    \label{eq:discrete-doob-kernel}
    P_{\theta,t_l}^h
    (
      \bar \bx_{t_{l-1}}
      \mid
      \bar \bx_{t_l}
    )
    =
    P_{\theta,t_l}
    (
      \bar \bx_{t_{l-1}}
      \mid
      \bar \bx_{t_l}
    )
    \frac{
      h(\bar \bx_{t_{l-1}},t_{l-1})
    }{
      h(\bar \bx_{t_l},t_l)
    },
    \qquad l=1,\ldots,L.
  \end{align}
  The resulting Doob-transformed reverse chain has marginal distribution
  \begin{align*}
    p_{\theta,t_l}^h(\bar \bx_{t_l})
    =
    \frac{
      p_{\theta,t_l}(\bar \bx_{t_l})
      h(\bar \bx_{t_l},t_l)
    }{
      Z_\beta
    },
    \qquad l=0,\ldots,L,
  \end{align*}
  In particular, since
  \begin{align*}
    h(\bx_0,0)
    =
    \exp(R(\bx_0)/\beta),
  \end{align*}
  the clean-time marginal of the Doob-transformed reverse chain is exactly the
  reward-tilted distribution
  \begin{align*}
    p_{\theta,0}^h(\bar\bx_0)
    =
    \pi_\beta(\bar\bx_0)
    =
    \frac{
      p_{\theta,0}(\bar\bx_0)\exp(R(\bar\bx_0)/\beta)
    }{
      Z_\beta
    }.
  \end{align*}
\end{theorem}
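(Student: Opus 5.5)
The plan is to prove everything from one identity, the one-step harmonic (martingale) property of $h$ under the frozen reverse chain:
\begin{align*}
  h(\bar\bx_{t_l},t_l)=\sum_{\bar\bx_{t_{l-1}}} P_{\theta,t_l}(\bar\bx_{t_{l-1}}\mid\bar\bx_{t_l})\,h(\bar\bx_{t_{l-1}},t_{l-1}),\qquad l=1,\ldots,L.
\end{align*}
This follows from the tower property applied to \cref{def:discrete-doob-h-function}. The frozen reverse chain is Markov, so conditioning on $\bar\bX_{t_l}$ and then on $\bar\bX_{t_{l-1}}$ gives $\E[\phi(\bar\bX_0)\mid\bar\bX_{t_{l-1}},\bar\bX_{t_l}]=h(\bar\bX_{t_{l-1}},t_{l-1})$. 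Averaging over $\bar\bX_{t_{l-1}}\sim P_{\theta,t_l}(\cdot\mid\bar\bx_{t_l})$ yields the identity.

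Two preliminary observations follow.
\begin{itemize}[leftmargin=1.5em]
  \item Since $\phi>0$ on a finite state space, $h>0$ on every reachable state. Hence \eqref{eq:discrete-doob-kernel} is well defined on the support of $p_{\theta,t_l}$; off that support the kernel may be set arbitrarily.
  \item By the harmonic identity, the rows of $P^h_{\theta,t_l}$ are nonnegative and sum to one, so $P^h_{\theta,t_l}$ is a genuine Markov kernel.
\end{itemize}
At $l=L$ the state $\bar\bX_{t_L}$ is the deterministic all-mask state. The conditional expectation in \cref{def:discrete-doob-h-function} is then unconditional, so $h(\bar\bx_{t_L},t_L)=\E_{p_\theta}[\phi(\bX_0)]=Z_\beta$. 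The initial law is a point mass, so the claimed formula $p^h_{\theta,t_L}=p_{\theta,t_L}h/Z_\beta$ holds trivially and the initial distribution is unchanged.

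The marginal formula then follows by downward induction on $l$, starting from $l=L$. Assume $p^h_{\theta,t_l}=p_{\theta,t_l}h(\cdot,t_l)/Z_\beta$. Compute
\begin{align*}
  p^h_{\theta,t_{l-1}}(\bar\bx_{t_{l-1}})
  =\sum_{\bar\bx_{t_l}}\frac{p_{\theta,t_l}(\bar\bx_{t_l})h(\bar\bx_{t_l},t_l)}{Z_\beta}\,P_{\theta,t_l}(\bar\bx_{t_{l-1}}\mid\bar\bx_{t_l})\frac{h(\bar\bx_{t_{l-1}},t_{l-1})}{h(\bar\bx_{t_l},t_l)}.
\end{align*}
The factor $h(\bar\bx_{t_l},t_l)$ cancels. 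The remaining sum is the Chapman--Kolmogorov step $\sum_{\bar\bx_{t_l}} p_{\theta,t_l}P_{\theta,t_l}=p_{\theta,t_{l-1}}$, which gives $p_{\theta,t_{l-1}}h(\cdot,t_{l-1})/Z_\beta$. Taking $l=0$ and using $h(\bx_0,0)=\phi(\bx_0)$, the clean-time marginal is exactly $\pi_\beta$ from \eqref{eq:tilted-target}.

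Optionally, one can also check the stronger path-level statement. The path law of the transformed chain telescopes to $\prod_l P_{\theta,t_l}\cdot h(\bar\bx_0,0)/h(\bar\bx_{t_L},t_L)$, that is, the base path law reweighted by $\phi(\bar\bx_0)/Z_\beta$. This gives the same marginals at once.

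The main obstacle is bookkeeping rather than depth: making the Markov property of the reverse chain explicit so the tower step is valid, and handling states of zero base probability so the division by $h$ makes sense. Both are handled by positivity of $\phi$ and by restricting the kernel to reachable states.
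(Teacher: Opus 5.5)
Your argument is correct, but it runs in the opposite direction from the paper's.

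\textbf{The paper's route.} It takes the path-level tilt as primitive and defines $\mathbb P_\theta^h$ by $\dd\mathbb P_\theta^h/\dd\mathbb P_\theta=\phi(\bX_0)/Z_\beta$. It then reads off \eqref{eq:discrete-doob-kernel} as the one-step conditional law of this measure, using a ratio of conditional expectations plus the Markov property, and gets normalization from the tower property. Every marginal follows in one line with no induction: $p^h_{\theta,t_l}(\bar\bx)=\E_{p_\theta}[\mathbf 1\{\bar\bX_{t_l}=\bar\bx\}\phi(\bX_0)]/Z_\beta=p_{\theta,t_l}(\bar\bx)h(\bar\bx,t_l)/Z_\beta$.

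\textbf{Your route.} You take the kernel as primitive. You use the harmonic identity only to certify that it is a Markov kernel, and then push the marginal formula down from the deterministic all-mask state by Chapman--Kolmogorov.

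\textbf{What each buys.} The paper's route explains why the kernel must have this form. It also gives the marginal identity for the tilted law at every $l$ irrespective of the initial state, so determinism of $\bar\bX_{t_L}$ enters only to say that the initial law needs no tilting. However, it implicitly identifies the process under $\mathbb P_\theta^h$ with the chain driven by $P^h_{\theta,t_l}$. Strictly, that requires conditioning on the whole past rather than just on $\bar\bX_{t_l}$; this holds by the same Markov computation but is not written out. Your forward induction concerns the chain the sampler actually runs, so it needs no such identification. Your optional telescoping remark recovers the paper's Radon--Nikodym identity from the kernel side, which is exactly how the paper proves its tempered-reference proposition in \cref{app:tempered-reference}.
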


The proof is deferred to \cref{app:proof-discrete-doob-htransform}. The theorem
identifies the exact population object that D-DOIT approximates. If the
downstream value ratio
$h(\bar \bx_{t_{l-1}},t_{l-1})/h(\bar \bx_{t_l},t_l)$ were available at every
reverse step, then local reweighting of the frozen transition kernel would
sample exactly from $\pi_\beta$ at clean time. This guarantee is a statement
about the true rollout value and the exact normalized kernel, not the
prediction-head implementation. In \cref{sec:method}, the head proxy, finite
candidate set, proposal and completion temperatures, weight sharpening, and
terminal selection each have a distinct role. Finite-candidate convergence
alone cannot remove a systematic value or proposal mismatch. We keep the
original $P$-referenced $\pi_\beta$ as the theoretical benchmark;
\cref{app:tempered-reference} separately states the exact transform for a
temperature-modified reference chain. The practical method in
\cref{sec:method} estimates this ratio without retraining the masked diffusion model. The continuous analogue appears as an additive score
correction $\nabla\log p_t+\nabla\log h_t$. The categorical and
irreversible reverse dynamics of masked diffusion have no Euclidean score field,
and instead require the multiplicative kernel reweighting in
\eqref{eq:discrete-doob-kernel}; \cref{app:continuous-doob} records the
continuous derivation.

\section{D-DOIT: A Discrete Sampler Based on Doob's \texorpdfstring{$h$}{h}-transform}
\label{sec:method}
In this section, we introduce \textbf{D-DOIT}, a training-free sampler that
guides a frozen masked discrete diffusion model toward the reward-tilted target
in \eqref{eq:tilted-target} using black-box rewards $R:\calX\to\mathbb R$
defined on clean sequences. The exact transform in
\cref{thm:discrete-doob-h-transform} says what the sampler should do:
reweight each frozen reverse kernel by the downstream value ratio of the
discrete Doob $h$-function. The challenge is computational, since evaluating
this value exactly would require future denoising rollouts from every candidate
next state. D-DOIT therefore follows an exact-object-to-estimator design:
it estimates the downstream value with the frozen prediction head, approximates
the transformed kernel with finite candidates, and optionally branches late to
hedge irreversible token reveals. \Cref{fig:overview} summarizes the construction.

% BEGIN INLINE figures/fig_ddoit_overview
\begin{figure}[!t]
  \centering
  \includegraphics[width=\textwidth]{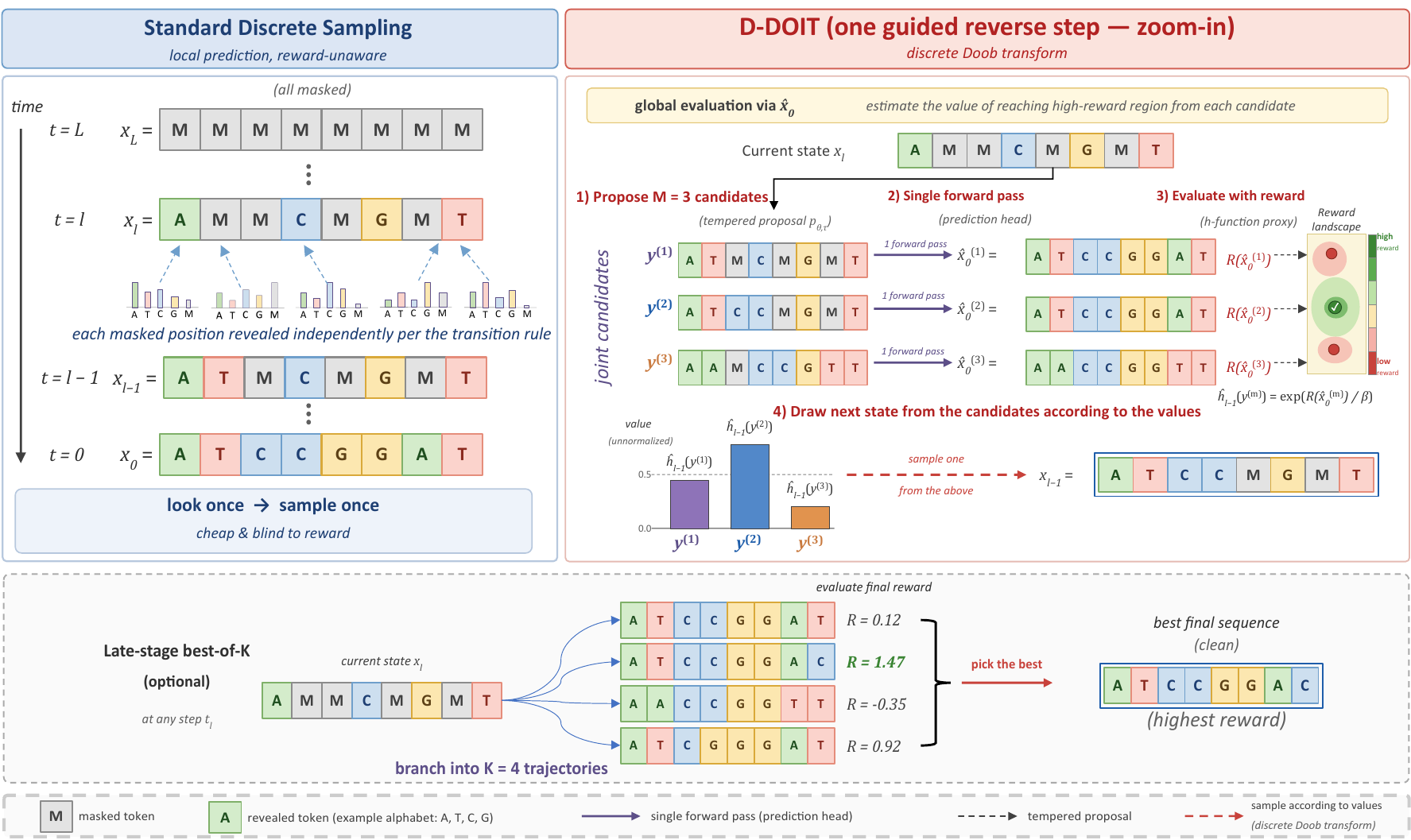}
  \caption{\textbf{Overview of D-DOIT.}
    \textbf{Left:} Standard masked diffusion unmasks positions independently at each reverse step~\eqref{eq:mdlm-transition}; it is cheap but reward-unaware.
    \textbf{Right:} One guided step approximately tilts the reverse kernel: it draws $M{=}3$ candidates from a tempered proposal~\eqref{eq:tempered-proposal}, completes and scores them with the prediction head and reward to obtain proxy $h$-values, and selects the next state with sharpened $h$-weights~\eqref{eq:tilted-resample}.
    \textbf{Bottom:} Optional best-of-$K$ refinement duplicates each trajectory into $K{=}4$ copies after the split and keeps the highest-reward suffix~\eqref{eq:bok-select}, sharing prefix computation. Revealed tokens stay fixed; the practical sampler does not inherit the exact-target guarantee.}
  \label{fig:overview}
  \vspace{-1em}
\end{figure}
% END INLINE figures/fig_ddoit_overview

\subsection{D-DOIT Algorithm}
\label{sec:doobmdm}
The transformed kernel in \eqref{eq:discrete-doob-kernel} depends on the
downstream value $h(\bar \by_{t_{l-1}},t_{l-1})$ for each candidate reverse
state $\bar \by_{t_{l-1}}\in\widetilde{\calX}$ reachable from the current
state $\bar \bx_{t_l}$. This subsection turns that exact identity into an
executable sampler. The first component estimates each candidate's downstream
value with a prediction-head completion. The second component uses these values
to form a finite-candidate approximation to the transformed kernel. The final
component adds an optional best-of-$K$ refinement for the irreversible reveal
structure of masked diffusion.

\paragraph{Prediction-head $h$-estimator.}
The exact value $h(\bar \by_{t_{l-1}},t_{l-1})$ averages the reward weight over all complete denoising continuations from the candidate next state $\bar \by_{t_{l-1}}$.
Naive Monte Carlo would roll out the frozen reverse chain many times from every candidate. D-DOIT avoids these nested rollouts by evaluating candidates with the masked discrete diffusion prediction head at $\bar \by_{t_{l-1}}$. Recall from \cref{sec:background-masked-discrete-diffusion} that
$\mu_\theta^j(\cdot\mid \bx_{t_l},t_l)\in\Delta(\calV)$ denotes the neural network's approximation to the clean-token posterior at coordinate $j$.
The head provides a direct clean-sequence completion distribution by fixing revealed tokens and independently sampling still-masked positions from the coordinate-wise clean-token posteriors $\mu_\theta^j(\cdot\mid\bar \by_{t_{l-1}},t_{l-1})$; we write $p^{\mathrm{head}}_\theta$ for this direct completion law, which is not generally a rollout conditional:
\begin{align}
  \label{eq:factorized-head}
  p^{\mathrm{head}}_\theta(\bx_0\mid\bar \by_{t_{l-1}})
  =
  \prod_{\mathclap{j:\,\bar y_{t_{l-1}}^j\neq\mathsf m}}\delta_{\bar y_{t_{l-1}}^j}(x_0^j)
  \prod_{\mathclap{j:\,\bar y_{t_{l-1}}^j=\mathsf m}} \mu_\theta^j(x_0^j\mid\bar \by_{t_{l-1}},t_{l-1}).
\end{align}
\begin{definition}[Prediction-head $h$-estimator]
  \label{def:head-estimator}
  \label{assumption:head-proxy}
  For every reachable candidate reverse state
  $\bar \by_{t_{l-1}}\in\widetilde{\calX}$, D-DOIT estimates the downstream
  value by the one-step clean-completion expectation under the factorized
  prediction-head distribution for each candidate state:
  \begin{align}
    \label{eq:h-rollout}
    \bar h_\theta(\bar \by_{t_{l-1}},t_{l-1})
    &=
    \E_{\bx_0\sim p^{\mathrm{head}}_\theta(\bx_0\mid\bar \by_{t_{l-1}})}
    \!\left[\phi(\bx_0)\right].
  \end{align}
\end{definition}
The estimator \eqref{eq:h-rollout} replaces the rollout value
$h(\bar \by_{t_{l-1}},t_{l-1})$ from \cref{def:discrete-doob-h-function} with a
single-step completion value. This is the main approximation that makes the
transform practical; \cref{app:proxy-discussion} discusses its bias relative to full future rollouts. For each candidate, we estimate \eqref{eq:h-rollout} with
one tempered completion
$\hat\bx_0\sim p^{\mathrm{head}}_{\theta,\gamma}(\cdot\mid\bar \by_{t_{l-1}})$:
\begin{align}
  \label{eq:h-hat-candidate}
  \hat h(\bar \by_{t_{l-1}},t_{l-1})
  &=\exp(R(\hat\bx_0)/\beta),
  \qquad
  \hat\bx_0\sim p^{\mathrm{head}}_{\theta,\gamma}(\cdot\mid\bar \by_{t_{l-1}}),
  \\
  \mu_{\theta,\gamma}^j(v\mid\bar \by_{t_{l-1}},t_{l-1})
  &\propto
  \mu_\theta^j(v\mid\bar \by_{t_{l-1}},t_{l-1})^{1/\gamma}.
\end{align}
The completion temperature $\gamma$ tempers the \emph{completion} of a candidate $\bar \by_{t_{l-1}}$ into a clean sequence. Smaller $\gamma$ sharpens the completion toward the model mode, while larger $\gamma$ smooths noisy predictions.
Because the reward is queried only on the sampled prediction-head completion $\hat\bx_0$, $R(\cdot)$ need not be differentiable, and no gradients are backpropagated through the reward oracle. Using $n_h$ independent completions per candidate averages
\begin{align}
  \label{eq:multi-completion}
  \hat h_{n_h}(\bar \by_{t_{l-1}},t_{l-1})
  &=\frac{1}{n_h}\sum_{i=1}^{n_h}\exp(R(\hat\bx_0^{(i)})/\beta),
  \qquad \hat\bx_0^{(i)}\sim p^{\mathrm{head}}_{\theta,\gamma}(\cdot\mid\bar\by_{t_{l-1}}),
\end{align}
which reduces Monte Carlo variance for this head expectation without removing its discrepancy from the rollout value. The reported main configurations use $n_h=1$.

\paragraph{Finite-candidate discrete $h$-transform.}
At reverse step $l$, D-DOIT first draws $M$ candidate next states
$\{\bar \by_{t_{l-1}}^{(m)}\}_{m=1}^M$ from a $\tau$-tempered masked discrete diffusion proposal:
\begin{align}
  \label{eq:tempered-proposal}
  \mu_{\theta,\tau}^j(v\mid\bar \bx_{t_l},t_l)
  \propto
  \mu_\theta^j(v\mid\bar \bx_{t_l},t_l)^{1/\tau},
  \qquad v\in\calV .
\end{align}
Both temperatures act on the same coordinate-wise prediction head $\mu_\theta^j$, but they play different roles: $\tau$ controls proposal diversity, while $\gamma$ controls the clean completion used for reward scoring. We draw each candidate $\bar \by_{t_{l-1}}^{(m)}$ from the masked discrete diffusion reverse kernel \eqref{eq:mdlm-transition} with $p_{\theta,\tau}$ in place of $p_\theta$, keeping the schedule factor $(\alpha_{l-1}-\alpha_l)/(1-\alpha_l)$ and the copy-through rule. All $M$ proposals share one forward pass, and each is completed once via \eqref{eq:h-hat-candidate}.
The next state $\bar \bx_{t_{l-1}}=\bar \by_{t_{l-1}}^{(m^*)}$ is drawn by
sampling the index $m^*$ over the $M$ candidates, each assigned probability
\begin{align}
  \label{eq:tilted-resample}
  w^{(m)}
  =\frac{\hat h(\bar \by_{t_{l-1}}^{(m)},t_{l-1})^\omega}
  {\sum_{r=1}^M \hat h(\bar \by_{t_{l-1}}^{(r)},t_{l-1})^\omega}.
\end{align}
With $\tau=1$, \eqref{eq:tilted-resample} is a finite-candidate
self-normalized approximation to the $h$-reweighted transition induced by
\eqref{eq:discrete-doob-kernel}, with $\omega$ controlling how sharply the
sampler follows the estimated downstream value. For one completion,
$\hat h^\omega=\exp(\omega R(\hat\bx_0)/\beta)$, so $\omega$ is not a separate
approximation. Every $(\omega,\beta)$ pair is equivalent to $\omega=1$ with
effective inverse temperature $\eta=\omega/\beta$, the parameter used in
\cref{app:tempered-reference}.
With $\tau>1$, the proposal is deliberately smoothed, so the weighted selection rule becomes a biased proposal approximation that we use to maintain candidate diversity. Retaining the original $P$-referenced target under a smoothed proposal would instead require the importance weight
\begin{align}
  \label{eq:corrected-weight}
  \widetilde w^{(m)}\propto
  \frac{P_{\theta,t_l}(\bar\by_{t_{l-1}}^{(m)}\mid\bar\bx_{t_l})}
       {Q^\tau_l(\bar\by_{t_{l-1}}^{(m)}\mid\bar\bx_{t_l})}
  h(\bar\by_{t_{l-1}}^{(m)},t_{l-1}),
\end{align}
where $Q^\tau_l$ is the proposal kernel, assuming its support covers the target transition. Our operating points omit this ratio, so it is a reference correction, not the executed algorithm. Even with it, a biased proxy, sharpening, and terminal best-of-$K$ selection would remain.
A guided step costs $1+M$ model-side NFE: one shared proposal pass and $M$ completion passes. It also costs $M$ reward evaluations; \cref{app:nfe-breakdown} gives the full accounting and cost conventions.

\paragraph{Best-of-$K$ refinement.}
Revealed tokens cannot be revised, so D-DOIT optionally duplicates each active trajectory into $K$ copies after split update $s_\spl$, counted in reverse updates. The copies continue independently, and the method keeps the highest-reward clean sequence within each group. For each original trajectory $b$, the selected output is
\begin{align}
  \label{eq:bok-select}
  \bx_0^{(b)}=\bx_0^{(b,k_b^*)},
  \qquad
  k_b^*=\argmax_{k\in\{1,\ldots,K\}}R(\bx_0^{(b,k)}).
\end{align}
The $K\times$ multiplier is paid only after the split, hedging early decisions without always multiplying the full trajectory cost. This terminal argmax changes the output law and cannot repair earlier reveals. Before the split a trajectory is shared; after it the $K$ copies update independently, and with $K=1$ or $s_\spl=L$ the method reduces to D-DOIT. \Cref{alg:doit-discrete,alg:doit-guided-step} in the appendix give the sampler and its per-step guided update.

\subsection{Two Configurations: D-DOIT-LBOK-S and D-DOIT-Q}
\label{sec:operating-points}

We report two operating points of the same sampler that differ only in budget allocation.
\textbf{D-DOIT-LBOK-S} uses a small per-step candidate budget plus late best-of-$K$ refinement for the practical lower-budget regime.
\textbf{D-DOIT-Q} allocates more inference budget to candidate evaluation and/or split-time refinement to test the quality frontier of the frozen backbone.
\Cref{tab:hyperparams} lists the benchmark-specific values, and \cref{sec:exp} evaluates both configurations against training-free and training-based baselines.

\section{Experiments}
\label{sec:exp}

% We evaluate D-DOIT on regulatory DNA design in \cref{sec:exp:drakes}, on cell-type-specific enhancer design in \cref{sec:exp:ctrldna}, and on protein inverse folding in \cref{sec:exp:protein}.
% \Cref{tab:method-comparison} contrasts D-DOIT with the closest reward-guided sampling baselines.
% Compared to CG, D-DOIT does not require reward backprop, so it accepts non-differentiable rewards.
% Compared to TDS/SMC, D-DOIT selects among per-step candidate next states rather than full trajectory histories, which decouples cost from chain length and avoids particle-history bookkeeping.
% D-DOIT-LBOK-S is the practical lower-budget operating point, while D-DOIT-Q relaxes the inference-time budget for higher quality without retraining the backbone.

In this section, we include the following results: (1) regulatory DNA enhancer design (\cref{sec:exp:drakes}); (2) cell-type-specific enhancer design (\cref{sec:exp:ctrldna}); (3) protein inverse folding (\cref{sec:exp:protein}); (4) contrast with the closest reward-guided sampling baselines (\cref{sec:exp:runtime}); (5) analysis of the estimation accuracy of $\hat\bx_0$ (\cref{sec:exp:diagnostics-main}); (6) analysis of the impact of different reward functions (\cref{sec:exp:robustness}).

% BEGIN INLINE tables/method_comparison
\begin{table}[!ht]
  \centering
  \caption{\textbf{Comparison of reward-guided sampling methods for discrete diffusion.}
  We contrast D-DOIT with our guidance baselines (\cref{sec:exp}); CG = classifier guidance, TDS = twisted diffusion sampler, SMC = sequential Monte Carlo, SVDD = soft value-based decoding.}
  \label{tab:method-comparison}
  \renewcommand{\arraystretch}{0.95}
  \resizebox{\textwidth}{!}{%
    \begin{tabular}{@{}lcccc@{}}
      \toprule
      \textbf{Property}
      & \textbf{CG}
      & \textbf{TDS/SMC}
      & \textbf{SVDD}
      & \textbf{D-DOIT (Ours)} \\
      \midrule
      Reward intervention
      & gradient (backward)
      & particle weights
      & soft value estimate
      & Doob $h$-transform on the discrete kernel \\
      Requires backprop
      & \ding{51}
      & \ding{55}
      & \ding{55}
      & \ding{55} \\
      Weighted selection unit
      & ---
      & $M$ particle trajectories
      & $M$ candidate states
      & $M$ candidate next states (same $\bx_l$) \\
      Weight formula
      & ---
      & path likelihood ratio
      & estimated soft value
      & transformed kernel via $\hat h^\omega$ \\
      Doob kernel tilt
      & \ding{55}
      & \ding{55}
      & \ding{55}
      & \ding{51} \\
      Candidate diversity
      & ---
      & different histories
      & same state start
      & same $\bx_l$ start \\
      \bottomrule
    \end{tabular}%
  }
\end{table}
% END INLINE tables/method_comparison

% Furthermore, we also provide xxx
% A compute-matched comparison must state whether its budget is backbone NFE, scored reward candidates, or wall-clock time.
% Held-out prediction scores assess design quality, not experimental activity. Diagnostics, oracle robustness, ablations, and full protocols are reported in the appendix, including \cref{sec:ablation,app:drakes-search-controls,app:ctrldna-search-controls,app:exp:protein}.

Furthermore, we conducted detailed ablation studies on the D-DOIT algorithm using the regulatory DNA sequence design task to analyze the relationship between its computational efficiency and performance, and compared it with other algorithms (\cref{app:exp:drakes,sec:ablation}). The task cell type specific enhancer design also features a detailed efficiency-performance analysis (\cref{app:exp:ctrldna}. Detailed experimental setups are also provided for the protein inverse folding task, serving as a complement to cross-modal transfer (\cref{app:exp:protein}).
\cref{tab:method-comparison} shows the differences between our algorithm and several other algorithms.

\subsection{Task 1: Regulatory DNA Sequence Design}
\label{sec:exp:drakes}
Regulatory DNA elements direct cell-type-specific gene expression.
They form an important target for gene therapy~\cite{gosai2023machine}.
We use the HepG2 enhancer design benchmark of \cite{wang2025drakes}: generate $200$\,bp enhancer sequences with high predicted HepG2 activity and natural enhancer statistics on a frozen masked discrete diffusion backbone.
The dataset, reward oracle, checkpoint, protocol, metrics, and baselines are documented in \cref{app:exp:drakes:setup,app:exp:drakes:metrics}.

% BEGIN INLINE tables/results
\begin{table}[!ht]
  \centering
  \footnotesize
  \renewcommand{\arraystretch}{0.94}
  \caption{\textbf{HepG2 enhancer design results.}
    We compare D-DOIT with training-free guidance and training-based finetuning baselines on DRAKES HepG2; all metrics are higher-is-better.
    Parentheses denote standard deviation across seeds, and \textbf{bold}/\underline{underline} mark best/second best per column.
  D-DOIT uses held-out eval/test seed protocols in \cref{app:drakes-lbok-heldout,app:drakes-heldout}.
  $^\dagger$SVDD-PM~\citep{li2025derivativefree} is the matched-time control from \cref{tab:drakes-search-controls} ($M=416$), not part of the original DRAKES sweep.}
  \label{tab:results_drakes}
  \resizebox{\textwidth}{!}{%
    \begin{tabular}{@{}lcccc@{}}
      \toprule
      Method & Pred-Activity (median)\,$\uparrow$ & ATAC-Acc\,$\uparrow$\,(\%) & 3-mer Corr\,$\uparrow$ & JASPAR Corr\,$\uparrow$ \\ \midrule
      \addlinespace[0.1em]
      \multicolumn{5}{@{}l}{\emph{Training-free methods}} \\
      \addlinespace[0.1em]
      Pre-trained & 0.17(0.04) & 1.5(0.2) & -0.061(0.034) & 0.249(0.015) \\
      CG & 3.30(0.00) & 0.0(0.0) & -0.065(0.001) & 0.212(0.035) \\
      SMC & 4.15(0.33) & 39.9(8.7) & 0.840(0.045) & 0.756(0.068) \\
      TDS & 4.58(0.49) & 40.0(14.1) & 0.817(0.097) & 0.798(0.044) \\
      SVDD-PM$^\dagger$ & 6.04(0.07) & 39.8(0.4) & 0.688(0.012) & 0.917(0.002) \\
      \textbf{D-DOIT-LBOK-S (Ours)} & 4.56(0.04) & 65.0(1.4) & 0.851(0.017) & \textbf{0.948(0.002)} \\
      \textbf{D-DOIT-Q (Ours)} & \underline{\MainResult(\MainResultUncertainty)} & 91.7(0.9) & \textbf{0.894(0.009)} & \underline{0.923(0.002)} \\ \midrule
      \addlinespace[0.1em]
      \multicolumn{5}{@{}l}{\emph{Training-based methods}} \\
      \addlinespace[0.1em]
      CFG & 5.04(0.06) & \underline{92.1(0.9)} & 0.746(0.001) & 0.864(0.011) \\
      DRAKES w/o KL & \textbf{6.44(0.04)} & 82.5(2.8) & 0.307(0.001) & 0.557(0.015) \\
      \textbf{DRAKES} & 5.61(0.07) & \textbf{92.5(0.6)} & \underline{0.887(0.002)} & 0.911(0.002) \\
      \bottomrule
    \end{tabular}%
  }
  % \vspace{0.1\baselineskip}
\end{table}
% END INLINE tables/results

\Cref{tab:results_drakes} reports quality metrics.
D-DOIT-Q outperforms every training-free baseline in \cref{tab:results_drakes} on every metric.
D-DOIT-Q reaches $\MainResult$ Pred-Activity, $91.7\%$ ATAC-Acc, $0.894$ 3-mer Corr, and $0.923$ JASPAR Corr.
It exceeds training-based DRAKES on Pred-Activity, 3-mer Corr, and JASPAR Corr, while remaining slightly lower on ATAC-Acc ($91.7\%$ vs.\ $92.5\%$).
The near-TDS-speed D-DOIT-LBOK-S matches TDS on Pred-Activity ($4.56$ vs.\ $4.58$) while lifting ATAC-Acc from $40.0\%$ to $65.0\%$, 3-mer Corr from $0.817$ to $0.851$, and JASPAR Corr from $0.798$ to $0.948$ over TDS.
At a similar measured wall-clock budget, Q improves all four metric means over SVDD-PM and improves activity, accessibility, and 3-mer correlation over Frozen Best-of-732; endpoint search retains higher JASPAR correlation.
\Cref{app:drakes-search-controls,tab:drakes-search-controls} gives the matched-time results and full protocol.
A guided step costs $1+M$ model forward passes plus $M$ reward evaluations.
The high-budget Q row is calibrated to endpoint sampling at approximately $10.6$ seconds per output on the same A100; \cref{app:nfe-breakdown} gives the accounting.

\subsection{Task 2: Cell-type-specific Enhancer Design}
\label{sec:exp:ctrldna}

Cell-type-specific enhancer design asks for sequences active in a target cell line and silent in off-target cell lines.
This task is a stricter test of controllable generation than \cref{sec:exp:drakes}.
We conduct experiments on the Ctrl-DNA enhancer benchmark of \cite{chen2025ctrldna}.
The benchmark spans three cell lines (HepG2, K562, SK-N-SH) and supplies cell-type-specific Enformer-based reward oracles.
The dataset, composite specificity reward, per-cell-line oracles, diffusion training details, and the baseline-tuning protocol are documented in \cref{app:exp:ctrldna:setup,app:exp:ctrldna:baseline-tuning}.

We report specificity ($\Delta R$), Motif Corr, and set-level diversity; $\Delta R$ is the target-cell oracle score minus the average off-target score.
Baselines include the training-free methods as in \cref{sec:exp:drakes}, regLM~\cite{lal2024reglm}, RL baselines~\cite{chen2025ctrldna}, and directed evolution; full metric definitions and baseline tuning are documented in the appendix, \cref{app:exp:ctrldna:metrics,app:exp:ctrldna:baseline-tuning}.

\Cref{tab:results_ctrldna} reports three-cell-line averages. Among the diffusion-guidance baselines, D-DOIT clearly improves specificity: D-DOIT-LBOK-S raises $\Delta R$ from $0.002$ to $0.033$ over TDS while improving Motif Corr from $0.573$ to $0.648$ at near-TDS wall-clock.
D-DOIT-Q pushes $\Delta R$ to $0.119$ while trading motif preservation for reward, giving the high-specificity point; LBOK-S remains the lower-budget point.
Under the matched-time control, endpoint search also improves specificity and captures much of this gain, unlike the larger activity and accessibility separation on DRAKES. Q's mean differences from Frozen Best-of-708 are small and are not claimed statistically significant, so the specificity gain is not unique to D-DOIT here.
Wall-clock, per-cell, and diversity details and the matched-time control are in \cref{app:exp:ctrldna:analysis,app:ctrldna-search-controls}.

\subsection{Task 3: Protein Inverse Folding}
\label{sec:exp:protein}
To test a different sequence space and reward, we apply D-DOIT to the DRAKES
protein inverse-folding benchmark~\citep{wang2025drakes}, using its reward
oracle and evaluation metrics. Here the target backbone is fixed and the
sampler generates amino-acid sequences. We evaluate predicted stability and
agreement between the predicted structure and that backbone, with the full
cross-method comparison including every reported metric for this benchmark in
\cref{tab:protein-results}.

Across all $12$ test backbones, three seeds, and $128$ sequences per backbone
and seed, D-DOIT achieves joint success of $68.1\pm3.5\%$, compared with
the published SMC and TDS results of $63.6\%$ and $62.9\%$
(\cref{tab:protein-results}). The configuration $M=52$, $K=8$, $n_h=1$ was fixed
before test evaluation, with no post-test tuning, and each entry reports the
mean and sample SD across the three seed-level metrics.
D-DOIT has the highest predicted stability and
joint success among the six methods, while TDS has better structural fidelity.
The other five rows are taken from DRAKES Table~2, not rerun or compute-matched
controls. \Cref{app:exp:protein} gives the full protocol. This supports transfer
to a second biological sequence-design task, not experimental protein validation.

% BEGIN INLINE tables/task_results_small
\begin{table}[H]
  \centering
  \caption{\textbf{Cell-type-specific enhancer design (left) and protein inverse folding (right).}
    Left: means over three cell lines, SD in parentheses; $\Delta R$ is target minus average off-target reward; training-based references are in \cref{tab:results_ctrldna_per_cell,app:exp:ctrldna:analysis}.
    Right: DRAKES protein benchmark; Pred-ddG and joint success are higher-is-better, scRMSD lower-is-better, SDs in \cref{tab:protein-results-full}.
    Bold marks the best mean per column, not significance.}
  \label{tab:results_ctrldna}
  \label{tab:protein-results}
  \begin{minipage}[t]{0.5\textwidth}
    \centering
    \scriptsize
    \renewcommand{\arraystretch}{1.00}
    \begin{tabular}{@{}lcc@{}}
      \toprule
      Method & $\Delta R$\,$\uparrow$ & Motif Corr.\,$\uparrow$ \\
      \midrule
      \multicolumn{3}{@{}l}{\emph{Training-free methods}} \\
      Pre-trained & $0.000(0.003)$ & $\underline{0.937(0.009)}$ \\
      CG & $0.000(0.003)$ & $\mathbf{0.938(0.009)}$ \\
      SMC & $0.000(0.010)$ & $0.587(0.114)$ \\
      TDS & $0.002(0.009)$ & $0.573(0.078)$ \\
      Frozen Best-of-708 & $0.108(0.028)$ & $0.490(0.165)$ \\
      D-DOIT-LBOK-S & $0.033(0.025)$ & $0.648(0.046)$ \\
      D-DOIT-Q & $\underline{0.119(0.063)}$ & $0.498(0.070)$ \\
      \bottomrule
    \end{tabular}
  \end{minipage}\hfill
  \begin{minipage}[t]{0.5\textwidth}
    \centering
    \scriptsize
    \renewcommand{\arraystretch}{1.00}
    \setlength{\tabcolsep}{3pt}
    \begin{tabular}{@{}lccc@{}}
      \toprule
      Method & Pred-ddG\,$\uparrow$ & scRMSD\,$\downarrow$ & Success\,(\%)$\,\uparrow$ \\
      \midrule
      \multicolumn{4}{@{}l}{\emph{Training-free methods}} \\
      Pre-trained & $-0.544$ & $0.849$ & $34.4$ \\
      CG & $-0.561$ & $0.839$ & $34.7$ \\
      SMC & $0.659$ & $0.841$ & $63.6$ \\
      TDS & $0.674$ & $\mathbf{0.834}$ & $62.9$ \\
      \textbf{D-DOIT (Ours)} & $\mathbf{0.870}$ & $0.872$ & $\mathbf{68.1}$ \\
      \midrule
      \multicolumn{4}{@{}l}{\emph{Training-based methods}} \\
      CFG & $-1.186$ & $3.146$ & $1.3$ \\
      \bottomrule
    \end{tabular}
  \end{minipage}
\end{table}
% END INLINE tables/task_results_small

% This existing analysis is part of the main empirical story; the appendix
% retains the detailed NFE accounting and diagnostic sweeps.
\subsection{Compute--quality Trade-off}
\label{sec:exp:runtime}

To compare compute across methods, we account for oracle reward calls in
addition to model evaluations. A guided update evaluates a shared proposal
head, $M$ candidate heads, and one reward per completed candidate; branching
adds work only after the split.

% BEGIN INLINE tables/speed
\begin{wraptable}{r}{0.4\textwidth}
\vspace{-1.5em}
  \centering
  \caption{\textbf{Per-output DRAKES work.}
  Full-window guidance, $L=128$, one completion per candidate.}
  \label{tab:speed}
  \footnotesize
  \setlength{\tabcolsep}{5pt}
  \renewcommand{\arraystretch}{1.35}
  \begin{tabular}{@{}lcc@{}}
    \toprule
    Quantity & LBOK-S & Q \\
    \midrule
    $(M,K,s_\spl)$ & $(9,4,105)$ & $(52,8,0)$ \\
    Model NFE & $1{,}970$ & $54{,}272$ \\
    Candidate completions & $1{,}773$ & $53{,}248$ \\
    \bottomrule
  \end{tabular}
\end{wraptable}
% END INLINE tables/speed

The lower-budget DRAKES setting uses $1{,}970$ model evaluations and
$1{,}773$ candidate completions per output, with $1{,}777$ reward evaluations
after ranking the four final branches. Terminal ranking adds up to $K$
queries when scores are not cached. In the $640$-output diagnostic,
$1{,}290$ vectorized oracle calls score $1{,}137{,}280$ sequences, so physical
calls and scored sequences are distinct cost units. Full accounting is given
in \Cref{app:nfe-breakdown}.

Model NFE alone therefore does not determine total compute. In
\cref{tab:reward-call-accounting}, Frozen Best-of-15 has a similar model
budget ($1{,}920$ versus $1{,}970$ evaluations per output) but only $15$
rather than $1{,}777$ oracle evaluations. Its $640$-output job is faster
($97.8$ versus $327.8$ seconds) but has lower Pred-Activity
($2.219$ versus $4.491$). Thus, this comparison approximately matches model
budget, not wall-clock time. The calibration in
\cref{tab:drakes-search-controls} instead matches endpoint sampling to Q at
approximately $10.6$ seconds per output.

LBOK-S remains the near-TDS-runtime operating point ($1.16\times$ in the
main timing study), whereas Q allocates substantially more compute to quality.
For Ctrl-DNA, Q and Frozen Best-of-708 take $5.43$ and $5.51$ seconds per
output on the same A100. Because absolute timings depend on batching and
memory use, we report each study under its own timing protocol. D-DOIT
generation costs recur for new samples, and comparisons with training-based
methods concern sample quality rather than matched or amortized training cost.
\subsection{Completion Quality and Search Diagnostics}
\label{sec:exp:diagnostics-main}

A crucial aspect of estimating $h$ is how to obtain $\hat\bx_0$; while a common approach to maintaining speed is to estimate $\hat\bx_0$ directly from $x_t$ in a single step, a more refined method involves obtaining $\hat\bx_0$ through iterative steps starting from $x_t$. To better compare these two approaches, we investigated the performance of different estimation methods.
Increasing completions per candidate from $n_h=1$ to $2$ and $3$ raises final
guidance reward $R$ from $5.272\pm0.057$ to $5.764\pm0.033$ and
$5.978\pm0.026$, at $41\%$ and $81\%$ additional wall time; we retain
$n_h=1$ as the default. Direct and sequential completions have similar
$h$-value distributions (Wasserstein-1 $0.060$--$0.124$;
\cref{sec:ablation:proxy}), supporting practical use without exact-target
equivalence.

\subsection{Oracle Robustness}
\label{sec:exp:robustness}
The choice of reward function is independent of the D-DOIT algorithm itself, yet it significantly influences the algorithm's performance. A more accurate reward function provides better guidance for the D-DOIT algorithm. To examine this, we designed various reward functions and observed their guiding effects.
Replacing $R$ with Enformer, CNN, or MLP predictions improves all four metrics
over unguided sampling (\cref{tab:oracle-guidance}); Pred-Activity
reaches $4.491$, $2.203$, and $1.544$, versus $0.371$ for the frozen model.
Held-out rescoring preserves these gains (\cref{tab:oracle-evaluators}), while
evaluator scales remain incomparable. The controls support robustness across
tested predictors without establishing experimental biological activity and are
distinct from the high-budget $\MainResult\pm\MainResultUncertainty$ result.

\section{Conclusion and Discussion}
\label{sec:conclusion}
We introduced D-DOIT, a training-free sampler that applies Doob's
$h$-transform directly to frozen masked discrete diffusion models.
Using a prediction-head $h$-estimator and finite-candidate Doob-weighted
selection, D-DOIT supports black-box rewards without reward gradients or
backbone retraining. Across DRAKES HepG2, Ctrl-DNA, and protein inverse
folding, D-DOIT outperforms training-free guidance baselines, although its
highest-quality results require a larger compute budget.

The method has three main limitations. First, practical approximations,
including temperature scaling, sharpening, and best-of-$K$ selection, may
introduce bias and do not retain the exact-target guarantee; late branching
also cannot revise revealed tokens. Second, evaluation relies on predictive
biological benchmarks and requires experimental validation. Third, the
highest-quality setting incurs substantial recurring inference cost, leaving
the reward--compute trade-off as a practical challenge. Overall, Doob-style
kernel reweighting provides a general framework for guiding discrete
generative models with sequence-level rewards.

\clearpage
\bibliographystyle{unsrtnat}
{\hbadness=10000
\bibliography{refs}}

\clearpage
\normalsize
\appendix
\label{sec:append}
\renewcommand{\arraystretch}{1.18}
\setlength{\tabcolsep}{6pt}

\part*{Appendix}
\addcontentsline{toc}{part}{Appendix}
{\startcontents[sections]

 \setlength{\parskip}{3.5pt}
 \printcontents[sections]{ }{1}{}
}

\clearpage
% Start every top-level appendix section on a new page.
\let\appendixoriginalsection\section
\renewcommand{\section}{\clearpage\appendixoriginalsection}
\clearpage
\section{Related Work}
\label{app:related}

We review related work on reward-guided adaptation of discrete diffusion and flow models, as well as regulatory DNA sequence design.

\paragraph{Reward-guided adaptation of discrete diffusion and flow models.}
Reward-guided generation has proved effective for aligning language models \cite{guo2025deepseek, brown2020language, wang2026sentinel, wang2026opera, wang2025customer, wang2026trajectory2task} and diffusion models \cite{ho2020denoising, hu2024statistical, hu2025statistical} with task-specific objectives.
In discrete generative models, however, guidance must act on categorical transitions over a finite vocabulary.
Early discrete denoising diffusion models use structured categorical or absorbing-state corruption~\citep{austin2021structured,hoogeboom2021argmax,campbell2022continuous}; masked discrete diffusion models scale a simple absorbing-mask objective~\citep{sahoo2024simple,shi2024simplified}; score-entropy and ratio-matching parameterizations form a non-masked alternative~\citep{lou2024sedd,ou2024radd}; and discrete flow matching extends the same idea to a flow framework~\citep{campbell2024discrete,gat2024discrete}. Reward adaptation for these models can be training-based or training-free. Training-based methods include DRAKES, which finetunes the generator through reward backpropagation~\citep{wang2025drakes}, trajectory-level alignment~\citep{han2025trajectory}, and reinforcement learning (RL)-style finetuning methods developed for continuous diffusion~\citep{black2023ddpo,fan2023dpok,clark2024draft,domingoenrich2024adjoint}; these methods change backbone weights and need a differentiable reward. Training-free methods instead steer a frozen backbone at inference time, including Sequential Monte Carlo and twisted sampling~\citep{wu2024practical,trippe2023diffusion,cardoso2024mcgdiff}, posterior sampling for inverse problems~\citep{chung2023dps}, soft value-based decoding~\citep{li2025derivativefree}, reward-guided iterative refinement~\citep{uehara2025reward}, clean-sample Markov chains~\citep{phunyaphibarn2026reward}, simple guidance for masked discrete diffusion~\citep{schiff2024simple}, denoising posterior prediction~\citep{rectorbrooks2024ddpp}, plug-and-play control~\citep{frey2024plugplay}, and inference-time scaling~\citep{singhal2025general,ma2025inference}. These methods steer a frozen backbone, but they do not directly tilt the discrete reverse kernel toward an optimal proposal.

Comparing with three representative efforts highlights the distinct role of D-DOIT.
DRAKES~\citep{wang2025drakes} provides a strong reward-optimization baseline, but it finetunes the backbone and requires a differentiable reward.
Soft value-based decoding~\citep{li2025derivativefree} selects candidates using importance scores on intermediate latents, but it does not tilt the reverse transition kernel itself.
Simple masked-diffusion guidance~\citep{schiff2024simple} introduces a heuristic guidance term, but lacks an explicit optimality target.
In contrast, D-DOIT is the first inference-time method to implement an executable estimator of Doob's $h$-transform for the discrete reverse kernel~\citep{doob1984classical,zhu2026trainingfreeadaptationdiffusionmodels,heng2021simulating,denker2024deft,debortoli2021schrodinger}.
It scores candidates through the resulting $h$-tilted kernel, combines a frozen masked discrete diffusion backbone with a black-box reward, and further supports optional best-of-$K$ refinement over the same frozen prior.
Because finite-candidate value-weighted selection also appears in soft
value-based decoding and related methods, we do not claim that local candidate
selection is unique to D-DOIT. CDM~\citep{kim2026cdm} learns a twist to amortize
value estimation, and DPRM~\citep{bu2026dprm} applies Doob-derived corrections
to token ordering while retaining a host token sampler; CSMC~\citep{phunyaphibarn2026reward}
instead searches over clean samples. Our distinction is the explicit
categorical reverse-kernel $h$-transform as the reference object, evaluated
with a frozen prediction-head proxy rather than a learned value network.

\paragraph{Regulatory DNA sequence design.}
Regulatory DNA design aims to generate sequences that score well under predictive assays and stay faithful to natural regulatory grammar. Oracle-guided optimization with experimental validation drives early machine-guided enhancer design~\citep{taskiran2024cell,gosai2023machine, su2026discrete}; autoregressive regulatory language models~\citep{lal2024reglm} and genomic foundation models~\citep{nguyen2023hyenadna,schiff2024caduceus,dalla2024nucleotide, zhou2025genomeocean, wu2025genome, zhou2025dnabert} provide strong sequence priors; and diffusion-based generators specialize these priors through Dirichlet diffusion and score-entropy variants~\citep{stark2024dirichlet,sarkar2024designing,avdeyev2023dirichlet}. For reward-optimization benchmarks, DRAKES evaluates HepG2 enhancer generation through activity, accessibility, $k$-mer, and motif-structure metrics~\citep{wang2025drakes}; Ctrl-DNA studies cell-type-specific enhancer design through constrained reinforcement learning~\citep{chen2025ctrldna}; and concurrent work combines discrete diffusion with tree search for cell-type-specific regulatory DNA design under related biological reward objectives~\citep{awasthi2026conditional}. The strongest methods on these benchmarks share one limit: they reach high reward only through backbone updates, policy training, or expanded inference search.

Comparing with three representative efforts highlights the efficiency and generality of D-DOIT.
DRAKES~\citep{wang2025drakes} finetunes the backbone and requires a differentiable reward.
Ctrl-DNA~\citep{chen2025ctrldna} achieves cell-type specificity through constrained RL with reward signals shaped for policy optimization.
The concurrent tree-search method of \citet{awasthi2026conditional} remains training-free at inference time, but expands a search tree over partial sequences and increases the per-step sampling cost.
In contrast, D-DOIT is the first method to compete on both benchmarks without policy training.
It trades modest inference-time compute for reward gains while preserving near-base per-step cost.

% \paragraph{Protein sequence design.}
% Hidden-state gradient guidance steers discrete diffusion for protein sequence optimization~\citep{gruver2023protein}, and reward-guided iterative refinement adapts diffusion samples at test time for protein and DNA design~\citep{uehara2025reward}. In inverse folding, the target backbone is fixed and the generated object is an amino-acid sequence. DRAKES studies this setting through reward-based finetuning and evaluates both predicted stability and structural fidelity~\citep{wang2025drakes}. Our protein experiment uses its benchmark to test frozen-model reward guidance beyond DNA. The published baseline rows provide a literature comparison, not matched-compute reruns; predicted stability and joint stability--structure success remain distinct from experimental protein validation.

\FloatBarrier
\clearpage
\section{Proof of Theorem~\ref{thm:discrete-doob-h-transform}}
\label{app:proof-discrete-doob-htransform}
\begin{proof}
  Let
  \begin{align*}
    \phi(\bx_0)
    =
    \exp(R(\bx_0)/\beta),
    \qquad
    Z_\beta
    =
    \E_{p_\theta}\left[\phi(\bX_0)\right].
  \end{align*}
  All identities below are understood on the support of the pretrained reverse
  chain.
  Let $\mathbb P_\theta$ denote the path law of the pretrained reverse chain,
  \begin{align*}
    \mathbb P_\theta
    (
      \bar \bx_{t_0},\ldots,\bar \bx_{t_L}
    )
    =
    p_{\theta,t_L}(\bar \bx_{t_L})
    \prod_{l=1}^L
    P_{\theta,t_l}
    (
      \bar \bx_{t_{l-1}}
      \mid
      \bar \bx_{t_l}
    ).
  \end{align*}
  Define the reward-tilted path law by
  \begin{align*}
    \frac{\dd \mathbb P_\theta^h}{\dd \mathbb P_\theta}
    (
      \bar \bX_{t_0},\ldots,\bar \bX_{t_L}
    )
    =
    \frac{\phi(\bX_0)}{Z_\beta}.
  \end{align*}
  We now compute the one-step reverse transition under this tilted path law. For
  $l=1,\ldots,L$,
  \begin{align*}
    &\mathbb P_\theta^h
    \left(
      \bar \bX_{t_{l-1}}=\bar \bx_{t_{l-1}}
      \mid
      \StateSymbol=\bar \bx_{t_l}
    \right)
    \\
    &=
    \frac{
      \E_{p_\theta}
      \left[
        \mathbf 1\{
          \bar \bX_{t_{l-1}}=\bar \bx_{t_{l-1}}
        \}
        \phi(\bX_0)
        \mid
        \StateSymbol=\bar \bx_{t_l}
      \right]
    }{
      \E_{p_\theta}
      \left[
        \phi(\bX_0)
        \mid
        \StateSymbol=\bar \bx_{t_l}
      \right]
    }
    \\
    &=
    P_{\theta,t_l}
    \left(
      \bar \bx_{t_{l-1}}
      \mid
      \bar \bx_{t_l}
    \right)
    \frac{
      \E_{p_\theta}
      \left[
        \phi(\bX_0)
        \mid
        \bar \bX_{t_{l-1}}=\bar \bx_{t_{l-1}},
        \StateSymbol=\bar \bx_{t_l}
      \right]
    }{
      h(\bar \bx_{t_l},t_l)
    }.
  \end{align*}
  By the Markov property of the reverse chain, once
  $\bar \bX_{t_{l-1}}$ is given, the future evolution toward clean time is
  independent of $\StateSymbol$. Hence
  \begin{align*}
    \E_{p_\theta}
    \left[
      \phi(\bX_0)
      \mid
      \bar \bX_{t_{l-1}}=\bar \bx_{t_{l-1}},
      \StateSymbol=\bar \bx_{t_l}
    \right]
    =
    h(\bar \bx_{t_{l-1}},t_{l-1}).
  \end{align*}
  Therefore
  \begin{align*}
    &\mathbb P_\theta^h
    \left(
      \bar \bX_{t_{l-1}}=\bar \bx_{t_{l-1}}
      \mid
      \StateSymbol=\bar \bx_{t_l}
    \right)=
    P_{\theta,t_l}
    \left(
      \bar \bx_{t_{l-1}}
      \mid
      \bar \bx_{t_l}
    \right)
    \frac{
      h(\bar \bx_{t_{l-1}},t_{l-1})
    }{
      h(\bar \bx_{t_l},t_l)
    }.
  \end{align*}
  This proves that the reward-tilted path law induces the Doob-transformed
  reverse kernel in \eqref{eq:discrete-doob-kernel}. In particular, the kernel is
  normalized because
  \begin{align*}
    &\sum_{\bar \bx_{t_{l-1}}}
    P_{\theta,t_l}
    \left(
      \bar \bx_{t_{l-1}}
      \mid
      \bar \bx_{t_l}
    \right)
    \frac{
      h(\bar \bx_{t_{l-1}},t_{l-1})
    }{
      h(\bar \bx_{t_l},t_l)
    }=
    \frac{
      \E_{p_\theta}
      \left[
        h(\bar \bX_{t_{l-1}},t_{l-1})
        \mid
        \StateSymbol=\bar \bx_{t_l}
      \right]
    }{
      h(\bar \bx_{t_l},t_l)
    }
    =
    1,
  \end{align*}
  where the last equality follows from the tower property and the definition of
  $h$.

  It remains to compute the marginals. For any $l=0,\ldots,L$,
  \begin{align*}
    p_{\theta,t_l}^h(\bar \bx_{t_l})
    &=
    \mathbb P_\theta^h
    (
      \StateSymbol=\bar \bx_{t_l}
    )
    \\
    &=
    \frac{
      \E_{p_\theta}
      \left[
        \mathbf 1\{
          \StateSymbol=\bar \bx_{t_l}
        \}
        \phi(\bX_0)
      \right]
    }{
      Z_\beta
    }
    \\
    &=
    \frac{
      p_{\theta,t_l}(\bar \bx_{t_l})
      \E_{p_\theta}
      \left[
        \phi(\bX_0)
        \mid
        \StateSymbol=\bar \bx_{t_l}
      \right]
    }{
      Z_\beta
    }
    \\
    &=
    \frac{
      p_{\theta,t_l}(\bar \bx_{t_l})
      h(\bar \bx_{t_l},t_l)
    }{
      Z_\beta
    }.
  \end{align*}

  In the fully masked terminal setting, $\bar \bX_{t_L}$ is deterministic. Hence
  \begin{align*}
    h(\bar \bX_{t_L},t_L)
    =
    \E_{p_\theta}\left[\phi(\bX_0)\right]
    =
    Z_\beta,
  \end{align*}
  so the transformed terminal distribution coincides with the pretrained
  terminal distribution.

  Finally, since $\bar \bX_{t_0}=\bX_0$,
  \begin{align*}
    h(\bx_0,0)
    =
    \E_{p_\theta}
    \left[
      \phi(\bX_0)
      \mid
      \bX_0=\bx_0
    \right]
    =
    \phi(\bx_0)
    =
    \exp(R(\bx_0)/\beta).
  \end{align*}
  Therefore
  \begin{align*}
    p_{\theta,0}^h(\bx_0)
    =
    \frac{
      p_{\theta,0}(\bx_0)
      \exp(R(\bx_0)/\beta)
    }{
      Z_\beta
    }
    =
    \pi_\beta(\bx_0).
  \end{align*}
  This completes the proof.
\end{proof}

\paragraph{A separate tempered reference chain.}
\label{app:tempered-reference}

The theorem above retains the original frozen $P_\theta$ reference and target
$\pi_\beta$. The following proposition describes a different reference law; it
neither replaces that target nor identifies the executed estimator with an exact
Doob value.

\begin{proposition}[Doob $h$-transform relative to a tempered reference]
  Fix $\tau>0$. Let $Q^\tau_l$ be the reverse kernel obtained from
  $P_{\theta,t_l}$ by replacing $\mu_\theta$ with the normalized
  $\mu_{\theta,\tau}$ in \eqref{eq:tempered-proposal}, while preserving the
  reveal schedule and copy rule. Let $\mathbb Q^\tau$ be the path law using
  these kernels at every reverse step and starting from the deterministic
  all-mask state $\bar\bx_{t_L}=\mathsf m^N$.
  Assume the chain ends in $\calX$, and let $q^\tau_0$ be its clean-time law.
  For $\eta>0$, suppose
  $0<Z_{\tau,\eta}:=\E_{\mathbb Q^\tau}[\exp(\eta R(\bX_0))]<\infty$.
  Define the true reference-chain value, on reachable states, by
  \begin{align*}
    h_{\tau,\eta}(\bar\bx,t_l)
    &=\E_{\mathbb Q^\tau}
    [\exp(\eta R(\bX_0))\mid\StateSymbol=\bar\bx].
  \end{align*}
  Then the kernels
  \begin{align*}
    Q^{\tau,h}_l(\bar\by\mid\bar\bx)
    &=Q^\tau_l(\bar\by\mid\bar\bx)
    \frac{h_{\tau,\eta}(\bar\by,t_{l-1})}
    {h_{\tau,\eta}(\bar\bx,t_l)}
  \end{align*}
  are normalized. Starting from the same deterministic mask, their clean-time
  marginal is
  \begin{align*}
    q^{\tau,h}_0(\bx_0)
    &=\frac{q^\tau_0(\bx_0)\exp(\eta R(\bx_0))}{Z_{\tau,\eta}}.
  \end{align*}
\end{proposition}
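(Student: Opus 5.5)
The plan is to reuse the argument in the proof of \cref{thm:discrete-doob-h-transform} almost verbatim, replacing the frozen chain $\mathbb P_\theta$ with the tempered reference path law $\mathbb Q^\tau$ and the weight $\phi$ with $\exp(\eta R)$. The theorem's proof used only three facts: that the reference is a Markov chain with the stated kernels, that the terminal weight has finite positive expectation, and that the initial state is deterministic. All three hold here. $\mathbb Q^\tau$ is Markov by construction, since each $Q^\tau_l$ depends only on the current state through $\mu_{\theta,\tau}$ and the unchanged reveal and copy rules. The assumption $0<Z_{\tau,\eta}<\infty$ supplies the integrability. The chain starts at the all-mask state, and the assumption that it ends in $\calX$ makes $\bX_0$ a clean sequence, so $R(\bX_0)$ is defined.

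\textbf{Step 1 (well-definedness and normalization).} Every reachable $\bar\bx$ has positive probability, and $\exp(\eta R)>0$, so $h_{\tau,\eta}(\bar\bx,t_l)>0$. It is also finite, because it is bounded by $Z_{\tau,\eta}/\mathbb Q^\tau(\StateSymbol=\bar\bx)$. Hence the ratio in $Q^{\tau,h}_l$ makes sense. Normalization follows from the tower property together with the Markov property:
\begin{align*}
\sum_{\bar\by}Q^\tau_l(\bar\by\mid\bar\bx)\,h_{\tau,\eta}(\bar\by,t_{l-1})
=\E_{\mathbb Q^\tau}\!\left[h_{\tau,\eta}(\bar\bX_{t_{l-1}},t_{l-1})\mid\StateSymbol=\bar\bx\right]
=h_{\tau,\eta}(\bar\bx,t_l).
\end{align*}
Unreachable $\bar\by$ have $Q^\tau_l(\bar\by\mid\bar\bx)=0$ and contribute nothing to the sum.

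\textbf{Step 2 (identify the chain).} Define the tilted path law by $\dd\mathbb Q^{\tau,h}/\dd\mathbb Q^\tau=\exp(\eta R(\bX_0))/Z_{\tau,\eta}$. The conditioning computation from the proof of \cref{thm:discrete-doob-h-transform} then shows that the one-step transitions of this tilted law are exactly $Q^{\tau,h}_l$. The key input is that $\E_{\mathbb Q^\tau}[\exp(\eta R(\bX_0))\mid\bar\bX_{t_{l-1}},\StateSymbol]=h_{\tau,\eta}(\bar\bX_{t_{l-1}},t_{l-1})$, which holds by the Markov property.

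Since the initial state is deterministic, $h_{\tau,\eta}(\mathsf m^N,t_L)=Z_{\tau,\eta}$, so the tilted law also starts at $\mathsf m^N$. The Markov chain that starts at $\mathsf m^N$ and runs the kernels $Q^{\tau,h}_l$ therefore has the same path law as $\mathbb Q^{\tau,h}$.

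An alternative to this conditioning argument is to telescope the product directly. Multiplying the kernels along a path cancels all intermediate $h$-values:
\begin{align*}
\prod_{l=1}^L Q^{\tau,h}_l=\left(\prod_{l=1}^L Q^\tau_l\right)\frac{h_{\tau,\eta}(\bar\bx_0,0)}{h_{\tau,\eta}(\mathsf m^N,t_L)}.
\end{align*}
Since $h_{\tau,\eta}(\bar\bx_0,0)=\exp(\eta R(\bx_0))$ and $h_{\tau,\eta}(\mathsf m^N,t_L)=Z_{\tau,\eta}$, summing over intermediate states immediately gives $q^\tau_0(\bx_0)\exp(\eta R(\bx_0))/Z_{\tau,\eta}$. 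I would present this telescoping version because it is the cleanest.

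\textbf{Step 3 (clean marginal).} Marginalizing the tilted path law at time $0$ gives the claimed formula for $q^{\tau,h}_0$.

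The main obstacle is bookkeeping on reachability. The $h$-values are defined only on reachable states, so I must check that the kernels $Q^{\tau,h}_l$ never move the chain to an unreachable state. This holds because $Q^{\tau,h}_l(\bar\by\mid\bar\bx)>0$ only where $Q^\tau_l(\bar\by\mid\bar\bx)>0$. The second point to check is that $\bar\bx_0$ lies in $\calX$ almost surely, so that $h_{\tau,\eta}(\cdot,0)=\exp(\eta R)$ there; this is exactly the assumption that the chain ends in $\calX$.
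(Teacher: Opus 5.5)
Your proposal is correct and follows essentially the same route as the paper: normalization from the Markov and tower properties, then telescoping the product of transformed kernels using $h_{\tau,\eta}(\bx_0,0)=\exp(\eta R(\bx_0))$ and $h_{\tau,\eta}(\mathsf m^N,t_L)=Z_{\tau,\eta}$, and finally marginalizing over the intermediate states. Your remarks on positivity, finiteness, and reachability spell out what the paper summarizes as holding ``on the reference support.''
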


\begin{proof}
  The Markov and tower properties give
  \begin{align*}
    h_{\tau,\eta}(\bar\bx,t_l)
    &=\sum_{\bar\by}Q^\tau_l(\bar\by\mid\bar\bx)
      h_{\tau,\eta}(\bar\by,t_{l-1}),
  \end{align*}
  which proves normalization and positivity on the reference support.
  At clean time $h_{\tau,\eta}(\bx_0,0)=\exp(\eta R(\bx_0))$;
  at the deterministic initial mask
  $h_{\tau,\eta}(\mathsf m^N,t_L)=Z_{\tau,\eta}$.
  Consequently, the likelihood ratio of the transformed path law to
  $\mathbb Q^\tau$ telescopes to
  \begin{align*}
    \prod_{l=1}^{L}
    \frac{h_{\tau,\eta}(\bar\bx_{t_{l-1}},t_{l-1})}
    {h_{\tau,\eta}(\bar\bx_{t_l},t_l)}
    &=\frac{\exp(\eta R(\bx_0))}{Z_{\tau,\eta}}.
  \end{align*}
  Marginalizing this identity over the intermediate states yields the claimed
  clean-time law.
\end{proof}

For $\tau=1$, $\eta=1/\beta$ this recovers the $P$-reference theorem.
Otherwise the target changes; the practical proxy, windowing, and terminal
selection do not inherit its exactness.

\FloatBarrier
\clearpage
\section{Continuous-time Doob's \texorpdfstring{$h$}{h}-transform on Score-based Diffusion}
\label{app:continuous-doob}

We record the standard continuous-time derivation for comparison with the discrete construction.
Consider a forward score-based diffusion on $\R^d$,
\begin{align*}
  \dd \bX_t = f(\bX_t,t)\,\dd t + g(t)\,\dd \bW_t,
  \qquad t \in [0,T],
\end{align*}
with marginal density $p_t$ and $\bX_0 \sim p_0$.
The reverse-time process admits the well-known form
\begin{align*}
  \dd \bX_t = \bigl(f(\bX_t,t)-g(t)^2\nabla\log p_t(\bX_t)\bigr)\,\dd t + g(t)\,\dd \overline{\bW}_t,
\end{align*}
where $\overline{\bW}_t$ is reverse-time Brownian motion~\citep{anderson1982reverse}.
For the continuous analogue of the reward-tilted target in \eqref{eq:tilted-target}, write the base terminal density as $p_0$ and denote the target by $\pi(\bx_0) \propto p_0(\bx_0)\phi(\bx_0)$.
Define the $h$-function
\begin{align*}
  h_t(\bx) = \E\big[\phi(\bX_0)\,\big|\,\bX_t = \bx\big].
\end{align*}
Doob's $h$-transform replaces $p_t$ with the tilted marginal $\tilde p_t \propto p_t h_t$, which gives the tilted reverse drift
\begin{align*}
  \bigl(f(\bx,t)-g(t)^2\nabla\log p_t(\bx)\bigr) - g(t)^2\nabla\log h_t(\bx)
  = f(\bx,t)-g(t)^2\nabla\log\bigl(p_t(\bx) h_t(\bx)\bigr).
\end{align*}
Equivalently, the score is shifted by an additive correction:
\begin{align*}
  \nabla\log p_t(\bx) \;\longrightarrow\; \nabla\log p_t(\bx) + \nabla\log h_t(\bx).
\end{align*}
With initialization from the tilted marginal at time $T$, the terminal law of the tilted reverse process is $\pi$~\citep{zhu2026trainingfreeadaptationdiffusionmodels,heng2021simulating,denker2024deft}; using the base marginal at $T$ is exact only when that marginal is unchanged by the tilt.
The continuous construction requires sufficient regularity of $h_t$ for $\nabla\log h_t$ to exist, not differentiability of $R$ itself.
Under suitable smoothing and integrability conditions, differentiating a transition density can give this correction even for a black-box reward.
The structural distinction for masked discrete diffusion is instead that $\widetilde{\calX}$ is categorical and has no Euclidean score field.
\Cref{sec:method} therefore formulates a discrete kernel-level analogue, where the tilt enters as a multiplicative reweighting of categorical transition probabilities.

\FloatBarrier
\section{On the Prediction-head \texorpdfstring{$h$}{h}-estimator}
\label{app:proxy-discussion}

\Cref{def:head-estimator} is the executable counterpart of the rollout h-function used by the exact discrete transform.
For attention-based masked discrete diffusion models, it is generally an estimator rather than an exact identity.
For a candidate state $\bar \by_{t_{l-1}}$ at time $t_{l-1}$, the exact quantity is
\begin{align*}
  h(\bar \by_{t_{l-1}},t_{l-1}) = \E_{p_\theta}[\phi(\bX_0) \mid \bar \bX_{t_{l-1}} = \bar \by_{t_{l-1}}],
\end{align*}
where the expectation runs over the pre-trained reverse chain from time $t_{l-1}$ down to clean time, conditional on $\bar \bX_{t_{l-1}} = \bar \by_{t_{l-1}}$.
At each subsequent reverse step $l' \in \{l-1, l-2, \ldots, 1\}$, the reverse kernel \eqref{eq:mdlm-transition} reveals each still-masked position from $\mu_\theta^j(x_0^j \mid \bar \bx_{t_{l'}},t_{l'})$ and treats positions as conditionally independent.
The copy flag freezes any position that the chain has already revealed.
The exact rollout marginal of $\bX_0$ given $\bar \bX_{t_{l-1}} = \bar \by_{t_{l-1}}$ therefore mixes per-position predictions at the \emph{intermediate} states $\bar \bx_{t_{l'}}$ that the chain visits, which carry more revealed context than $\bar \by_{t_{l-1}}$.

The factorized head \eqref{eq:factorized-head} evaluates the clean completion from the current state $\bar \by_{t_{l-1}}$ itself, giving
\begin{align}
  p^{\mathrm{head}}_\theta(\bx_0 \mid \bar \by_{t_{l-1}})
  = \prod_{j:\, \bar y_{t_{l-1}}^j = \mathsf m} \mu_\theta^j(x_0^j \mid \bar \by_{t_{l-1}},t_{l-1}) \cdot \prod_{j:\, \bar y_{t_{l-1}}^j \neq \mathsf m} \delta_{\bar y_{t_{l-1}}^j}(x_0^j).
\end{align}
A sufficient condition for this distribution to coincide with the full rollout marginal is that each still-masked coordinate's head prediction remains invariant across all reachable subsequent contexts and times until that coordinate is revealed.
For attention-based masked discrete diffusion models this invariance need not hold. The direct completion law therefore generally differs from the multi-step rollout law, even with arbitrarily many completion samples.
To separate sampling noise from this distributional mismatch, let $n_h$ denote the number of independent completions of a fixed candidate (reserving $N$ for sequence length), as in \eqref{eq:multi-completion}, and write
\begin{align*}
  \hat h_{n_h}(\bar\by,t)
  &=\frac{1}{n_h}\sum_{i=1}^{n_h}\exp(R(\hat\bx_0^{(i)})/\beta),
  \qquad \hat\bx_0^{(i)}\sim p^{\mathrm{head}}_{\theta,\gamma}(\cdot\mid\bar\by),\\
  \bar h_{\theta,\gamma}(\bar\by,t)
  &=\E_{p^{\mathrm{head}}_{\theta,\gamma}(\cdot\mid\bar\by)}[\exp(R(\bX_0)/\beta)].
\end{align*}
The executed single-completion estimator has $n_h=1$.
As $n_h\to\infty$, the sample average converges to $\bar h_{\theta,\gamma}$, not generally to the rollout value $h$; at $\gamma=1$ it recovers the head expectation in \eqref{eq:h-rollout}.

\begin{remark}[Practical tempering and selection bias]
  Completion temperature $\gamma\neq1$ changes the direct completion law and hence its reward expectation; it does not merely reduce Monte Carlo noise.
  Proposal temperature $\tau\neq1$ changes the candidate kernel, denoted $Q^\tau_l$, relative to $P_{\theta,t_l}$.
  With independent candidate draws and completion randomness, the large-$M$ limit of the uncorrected selection rule at a fixed current state $\bar\bx$ is proportional to
  \begin{align*}
    Q^\tau_l(\bar\by\mid\bar\bx)\,
    \E[\hat h_{n_h}(\bar\by,t_{l-1})^\omega\mid\bar\by].
  \end{align*}
  On the proposal support, multiplying each weight by
  $P_{\theta,t_l}(\bar\by\mid\bar\bx)/Q^\tau_l(\bar\by\mid\bar\bx)$
  as in \eqref{eq:corrected-weight} corrects only this proposal mismatch, provided the proposal covers the base-kernel support.
  It does not correct the head-versus-rollout mismatch, completion tempering, nonlinear sharpening, or finite-$M$ self-normalization.
  The executed sampler omits this likelihood ratio; its uncorrected tempered update is not the exact $P_\theta$ transform.

  In particular, $\E[\hat h_{n_h}^{\omega}\mid\bar\by]$ generally differs from
  $(\E[\hat h_{n_h}\mid\bar\by])^\omega$ and from $h(\bar\by,t)^{\omega}$.
  Even with exact rollout values, raising $\E_P[\exp(R/\beta)\mid\bar\by]$ to $\omega$ is generally not the same as computing
  $\E_P[\exp(\omega R/\beta)\mid\bar\by]$.
  For $n_h=1$, sharpening does yield an expectation of $\exp(\omega R/\beta)$ under the direct completion law, but that law is still not the rollout law.
  A separate exact transform relative to a tempered reference chain is stated in \cref{app:tempered-reference}; it requires that chain's true downstream expectation, not the practical estimator.
  Finally, restricting guidance to a window omits exact transformed updates outside that window, and terminal best-of-$K$ ranking generally changes the output law even if each branch were sampled exactly.
  Thus these operating choices are not a guarantee of sampling \eqref{eq:tilted-target}; their empirical effects are studied in \cref{sec:ablation:tau_gamma}.
\end{remark}

\begin{remark}[Comparison with the continuous case]
  The continuous-time h-function identity in~\cite{zhu2026trainingfreeadaptationdiffusionmodels} differentiates the Gaussian transition density to bypass the requirement that $\phi$ be differentiable.
  The discrete construction avoids differentiating through $\phi$ for a different reason: we query $\phi$ only on a sampled clean completion $\hat\bx_0 \sim p^{\mathrm{head}}_{\theta,\gamma}(\cdot \mid \bar \by_{t_{l-1}})$, so a single forward pass through the prediction head suffices to produce $\hat h(\bar \by_{t_{l-1}},t_{l-1}) = \phi(\hat\bx_0)$ as in \eqref{eq:h-hat-candidate}.
  The trade-off is that this single-shot
  estimator uses the factorized prediction head \eqref{eq:factorized-head}
  instead of explicitly simulating the multi-step rollout marginal under the
  frozen reverse chain.
\end{remark}

\FloatBarrier
\section{Standard Masked Discrete Diffusion Sampling}
\label{sec:mdlm-baseline}

\Cref{alg:mdlm-standard} records the standard generative process of the pre-trained masked discrete diffusion model~\cite{sahoo2024simple} that D-DOIT builds on, provided here as background.
The forward process corrupts tokens with an absorbing $\mathsf m$ state.
The reverse process evaluates the clean-sequence prediction head $p^{\mathrm{head}}_\theta(\bx_0 \mid \bar \bx_{t_l})$ at each step.
It then unmasks each still-masked position with the per-step reveal schedule from \eqref{eq:mdlm-transition} and treats positions as conditionally independent.
Once a step reveals a token, later steps freeze it.
For consistency with the main text, \cref{alg:mdlm-standard} writes a linear
schedule with $\alpha_{t_l}=1-l/L$; other monotone schedules can be
substituted.

\begin{algorithm}[H]
  \caption{Standard Masked Discrete Diffusion Sampling (Baseline)}
  \label{alg:mdlm-standard}
  \begin{algorithmic}[1]
    \REQUIRE Pre-trained masked discrete diffusion model $p_\theta$, steps $L$
    \STATE Initialize $\bar \bx_{t_L} \gets (\mathsf m, \ldots, \mathsf m)$ \hfill\COMMENT{all positions masked}
    \FOR{$l = L, L-1, \ldots, 1$}
    \STATE Set $\alpha_{t_l}=1-l/L$,\; $\alpha_{t_{l-1}}=1-(l{-}1)/L$ \hfill\COMMENT{linear schedule for this baseline}
    \STATE Forward pass: obtain $\mu_\theta^j(v \mid \bar \bx_{t_l},t_l)$ for all positions $j$ \hfill\COMMENT{one model forward pass}
    \FOR{each masked position $j$ (where $\bar x_{t_l}^j = \mathsf m$)}
    \STATE Sample $\bar x_{t_{l-1}}^j$: unmask to token $v \in \calV$ with prob $\mu_\theta^j(v \mid \bar \bx_{t_l},t_l)(\alpha_{t_{l-1}}-\alpha_{t_l})/(1-\alpha_{t_l})$;\; stay masked with prob $(1-\alpha_{t_{l-1}})/(1-\alpha_{t_l})$ \hfill\COMMENT{\eqref{eq:mdlm-transition}}
    \ENDFOR
    \STATE Copy flag: $\bar x_{t_{l-1}}^j \gets \bar x_{t_l}^j$ for all unmasked $j$ \hfill\COMMENT{revealed tokens are frozen}
    \ENDFOR
    \STATE \textbf{return} $\bar \bx_0$
  \end{algorithmic}
\end{algorithm}

\par

\FloatBarrier
\section{Masked Diffusion Transition Details}
\label{app:masked-diffusion-details}

This section records the explicit matrices behind the forward and reverse
kernels summarized in \cref{sec:background-masked-discrete-diffusion}.
For $x\in\widetilde{\calV}$, let $e_x\in\mathbb R^{d+1}$ be its one-hot
vector including $e_{\mathsf m}$ for the mask token, and let
$\mathbf 1\in\mathbb R^{d+1}$ be the all-one vector.
Let $(X_{t_l})_{l=0}^L$ be the single-token chain with $X_0\in\calV$.
Under the column-vector convention, the forward transition matrix on
$[t_{l-1},t_l]$ is
\begin{align*}
  K_l
  =
  (1-b_l)I
  +
  b_l e_{\mathsf m}\mathbf 1^\top, \quad l=1,\ldots,L,
\end{align*}
so the transition matrix from time $0$ to $t_l$ is
$\bar K_l= K_l K_{l-1}\cdots K_1=
\alpha_l I
+
(1-\alpha_l)e_{\mathsf m}\mathbf 1^\top,$
for $\alpha_l=\prod_{r=1}^{l}(1-b_r)$.
The conditional one-step time reversal from $t_l$ to $t_{l-1}$ has the closed
form
\begin{align*}
  K_l^{\leftarrow,x_0}
  &=
  I
  +
  (\alpha_{l-1}-\alpha_l)/(1-\alpha_l) \cdot
  (e_{x_0}-e_{\mathsf m})e_{\mathsf m}^\top,
  \qquad
  l=1,\ldots,L,
\end{align*}
with kernel
$q_{t_l,t_{l-1}}(y\mid x,x_0)=\mathbb{P}(X_{t_{l-1}}=y\mid X_{t_l}=x,X_0=x_0)
= e_y^\top K_l^{\leftarrow,x_0}e_x$.
Equivalently,
\begin{align*}
  q_{t_l,t_{l-1}}(y\mid x,x_0)
  &=
  \begin{cases}
    (\alpha_{l-1}-\alpha_l)/(1-\alpha_l),
    & x=\mathsf m,\ y=x_0,\\
    (1-\alpha_{l-1})/(1-\alpha_l),
    & x=\mathsf m,\ y=\mathsf m,\\
    \mathbf 1\{y=x\},
    & x\neq\mathsf m,
  \end{cases}
\end{align*}
so an already unmasked token remains fixed while a masked token is either kept
masked or revealed as its original clean value.
Conditioned on the clean sequence, the true one-step reverse kernel
factorizes across coordinates,
$q_{t_l,t_{l-1}}(\bx_{t_{l-1}}\mid \bx_{t_l},\bx_0)
=\prod_{j=1}^N q_{t_l,t_{l-1}}(x_{t_{l-1}}^j\mid x_{t_l}^j,x_0^j)$,
which is the analytic posterior that \eqref{eq:mdlm-transition} approximates.
The same derivation covers the reverse kernel used by the standard masked
diffusion sampler in \cref{sec:mdlm-baseline}.

\FloatBarrier
\section{Per-output NFE Breakdown}
\label{app:nfe-breakdown}

We record the per-output number of function evaluations (NFE), where one NFE counts one sequence-level masked discrete diffusion forward pass, not one vectorized invocation on a batch.
This remains a model-only accounting convention: reward evaluations on completed sequences are reported separately and are not included in NFE.
Batching can combine many sequence-level evaluations into one physical call without reducing this count; actual latency also depends on hardware, batch size, memory use, and reward-model latency.
Let $|\cdot|$ denote set cardinality, $s_\spl$ the number of reverse updates before the best-of-$K$ split, and $[l_\eend,l^*]$ the guidance window from \cref{alg:doit-discrete}.
For update index $s \in \{0,\ldots,L-1\}$, the corresponding reverse state index is $l = L-s$.
Define $\calG = \{s : l_\eend \leq L-s \leq l^*\}$ as the set of guided updates, with $s\in\{0,\ldots,L-1\}$ throughout.
Let the active branch multiplicity be $b_s=1$ for $s<s_\spl$ and $b_s=K$ for $s\geq s_\spl$, and set $S=\sum_{s\in\calG}b_s$.
Under the stated one-head-pass-per-candidate implementation, the per-output model NFE is:
\begin{align}
  \label{eq:nfe-cost}
  \NFE
  &=
  |\{s : s < s_\spl\}|
  +K\cdot|\{s : s_\spl \leq s < L\}| \nonumber\\
  &\quad +M\cdot|\{s : s \in \calG,\, s < s_\spl\}|
  +K\cdot M\cdot|\{s : s \in \calG,\, s_\spl \leq s < L\}|.
\end{align}
The first line counts the shared base masked discrete diffusion forward pass on $\bar \bx_{t_l}$ at every update (one pass per update before the split, $K$ passes per update after).
The second line counts the $M$ candidate prediction-head passes for $h$-estimation inside the guidance window, also with the $K\times$ multiplier after the split.
Equivalently, $\NFE=\sum_{s=0}^{L-1}b_s+MS$.
Drawing $n_h$ completions per candidate leaves this model NFE unchanged when all completions reuse that candidate's prediction-head probabilities; the executed single-completion sampler uses $n_h=1$.
The corresponding number of scored candidate completions per output is
\begin{align*}
  \text{candidate reward evaluations}=n_h M S.
\end{align*}
If $K>1$ and a split occurs, terminal ranking requires at most $K$ additional reward evaluations per output when scores of the final sequences are not cached; candidate-completion scores can be reused only when they score those same final sequences.
The number of physical batched reward calls is implementation-dependent and is not this sequence-level evaluation count.
Setting $K = 1$ or $s_\spl = L$ disables the split and its terminal ranking, recovering the core D-DOIT branch.
These counts describe computational work rather than a wall-clock speed guarantee.

\FloatBarrier
\section{Detailed Sampler Pseudocode}
\label{app:guided-step}

\Cref{alg:doit-discrete} gives the outer sampling loop and \cref{alg:doit-guided-step} the per-step guided update.

\par\vspace{6pt}
\par\begin{minipage}{\linewidth}
\setlength{\parskip}{0pt}
\setlength{\parindent}{0pt}
\refstepcounter{algorithm}\label{alg:doit-discrete}
\hrule height 0.8pt
\par\vspace{4pt}
\textbf{Algorithm~\thealgorithm} D-DOIT with optional best-of-$K$ refinement. The guided step is defined in \cref{alg:doit-guided-step};
setting $K{=}1$ or $s_\mathrm{split}{=}L$ disables the split. Copies share all tokens revealed before the split; final reward selection changes the output law.
\par\vspace{4pt}
\hrule height 0.4pt
\par\vspace{4pt}
\begin{algorithmic}[1]
  \setlength{\itemsep}{1.5pt}
  \REQUIRE Frozen $P_\theta$, reward $R$, batch size $B$, steps $L$, guidance window $[l_\mathrm{end},l^*]$, split-update index $s_\mathrm{split}$, width $K$, guided-step parameters $(M,\omega,\beta,\tau,\gamma)$
  \STATE Initialize $B$ all-$\mathsf m$ trajectories at time $t_L$.
  \FOR{$s=0,1,\ldots,L-1$}
  \STATE If $s=s_\mathrm{split}$ and $K>1$, duplicate each active trajectory into $K$ copies.
  \STATE Set the reverse state index $l=L-s$.
  \STATE Apply \textsc{DoobGuidedStep} to active trajectories when $l_\mathrm{end}\leq l\leq l^*$; otherwise apply the standard transition \eqref{eq:mdlm-transition}.
  \ENDFOR
  \STATE If $K>1$ and $s_\mathrm{split}<L$, select the highest-reward clean copy in each group by \eqref{eq:bok-select}; otherwise return the generated clean sequences.
\end{algorithmic}
\hrule height 0.4pt
\end{minipage}\par

\par\vspace{12pt}
\par\begin{minipage}{\linewidth}
\setlength{\parskip}{0pt}
\setlength{\parindent}{0pt}
\refstepcounter{algorithm}\label{alg:doit-guided-step}
\hrule height 0.8pt
\par\vspace{4pt}
\textbf{Algorithm~\thealgorithm} \textsc{DoobGuidedStep}: the practical single-completion update, without proposal-ratio correction.
\par\vspace{4pt}
\hrule height 0.4pt
\par\vspace{4pt}
\begin{algorithmic}[1]
  \setlength{\itemsep}{1.5pt}
  \REQUIRE current state $\bar \bx_{t_l}$, step index $l$, frozen model $\theta$, reward $R$; $(M,\omega,\beta,\tau,\gamma)$
  \STATE Compute $\alpha_l$ and $\alpha_{l-1}$ from the noise schedule
  \hfill\COMMENT{schedule constants}
  \STATE Proposal pass: compute $\mu_\theta^j(\cdot\mid \bar \bx_{t_l},t_l)$ for all $j$
  \STATE Form the $\tau$-tempered proposal head
  $\mu_{\theta,\tau}^j(v\mid \bar \bx_{t_l},t_l)
  \propto
  \mu_\theta^j(v\mid \bar \bx_{t_l},t_l)^{1/\tau}$
  \STATE Draw $\{\bar \by_{t_{l-1}}^{(m)}\}_{m=1}^{M}$
  from \eqref{eq:mdlm-transition} with $\mu_\theta$ replaced by $\mu_{\theta,\tau}$
  \hfill\COMMENT{candidate reverse states}
  \FOR{$m = 1,\ldots,M$}
  \STATE Draw $\hat{\bx}_0^{(m)} \sim p^{\mathrm{head}}_{\theta,\gamma}(\cdot \mid \bar \by_{t_{l-1}}^{(m)})$
  via \eqref{eq:h-hat-candidate}
  \hfill\COMMENT{prediction-head completion}
  \STATE $\hat{h}(\bar \by_{t_{l-1}}^{(m)},t_{l-1})
  \gets \exp\bigl(R(\hat{\bx}_0^{(m)})/\beta\bigr)$
  \hfill\COMMENT{random head-proxy value}
  \ENDFOR
  \STATE Select $m^*$ by the weights in \eqref{eq:tilted-resample};
  \textbf{return} $\bar \bx_{t_{l-1}}=\bar \by_{t_{l-1}}^{(m^*)}$
\end{algorithmic}
\par\vspace{4pt}
\hrule height 0.4pt
\end{minipage}\par

\FloatBarrier
\section{Additional Analyses}
\label{app:analysis}
The main experiments evaluate HepG2 enhancer design, cell-type-specific
regulatory DNA design, and protein inverse folding. This appendix collects
approximation diagnostics and oracle-transfer analyses; the compute--quality
trade-off is discussed in the main text.
Matched-time sampling controls are detailed in
\cref{app:drakes-search-controls,app:ctrldna-search-controls}; full diagnostic
sweeps appear in \cref{sec:ablation}.

% \paragraph{Further Approximation and Search Diagnostics.}
% \label{sec:exp:diagnostics}

\paragraph{Concentrated but stochastic selection.}
At $M=8$ and $\omega=8$, mean ESS is $1.143\pm0.002$ and the top-weighted
candidate is selected on $95.72\%$ of guided steps. Soft resampling reaches
final guidance reward $5.237\pm0.051$, versus $5.154\pm0.027$ for direct
argmax in the same diagnostic (\cref{sec:ablation:argmax}). Thus selection
is near-greedy but not deterministic; low ESS measures local weight
concentration, not final-sequence diversity.

\paragraph{Suffix search and nonmonotone progress.}
With $M=9$, $K=4$, and $n_h=1$ fixed, moving the split from update $105$ to
$0$ increases final guidance reward from $5.272\pm0.057$ to
$5.890\pm0.028$, but raises wall time from $326.4\pm0.5$ to
$793.6\pm2.2$ seconds per job. Even with $90.38\%$ of bases fixed,
independent suffix completions differ at $66.1\%$ of the remaining positions.
Late branching therefore searches a diverse suffix conditional on a fixed
prefix; it cannot revise pre-split reveals. Separately, held-out activity
of argmax-completed partial states is nonmonotone despite higher final
activity than frozen sampling (\cref{sec:ablation:split}). These completion-proxy
scores are distinct from final guidance reward $R$ and the batch-median
Pred-Activity benchmark. The diagnostics characterize practical search
trade-offs, not exact reward-tilted sampling.

\FloatBarrier
\clearpage
\section{Experimental Details of Regulatory DNA Sequence Design}
\label{app:exp:drakes}

In this section, we provide the experimental details and additional experimental analysis for the regulatory DNA sequence design task in \cref{sec:exp:drakes}.

\subsection{Dataset and Training Details}
\label{app:exp:drakes:setup}

\paragraph{Dataset.}
We conduct experiments on the HepG2 enhancer corpus from \citet{gosai2023machine}.
The corpus contains ${\sim}700$k $200$\,bp human enhancer sequences with measured HepG2 activity.
We follow the dataset split of \citet{wang2025drakes} so that our evaluation seeds match the DRAKES test partition and our numbers compare against the published DRAKES table.

\paragraph{Reward oracle setup.}
We use two Enformer-based oracles~\cite{avsec2021effective} from \citet{wang2025drakes}.
The two oracles use disjoint chromosome splits of the same corpus.
The finetuning oracle provides the reward signal $R(\bx_0)$ during guided sampling.
The held-out evaluation oracle scores Pred-Activity at test time.
This matches the DRAKES protocol and makes our numbers comparable to the TDS, SMC, CG, and DRAKES results in \citet{wang2025drakes}.

\paragraph{Pre-training setup.}
D-DOIT operates on a frozen pre-trained masked discrete diffusion model~\cite{sahoo2024simple}.
We do not update its parameters during sampling.
We use the public masked discrete diffusion enhancer checkpoint of \citet{wang2025drakes}, which the authors pre-train on the ${\sim}700$k enhancer corpus.

\paragraph{Sampling setup.}
We use the two operating points defined in \cref{sec:operating-points}: D-DOIT-LBOK-S with a small per-step budget plus a late-stage trajectory split, and D-DOIT-Q with a larger inference-time budget for quality.
\Cref{tab:hyperparams} lists the exact hyperparameter values per benchmark.
All main DRAKES rows use a single A100 80GB GPU and follow the DRAKES evaluation protocol with $640$ sequences per seed.
Most rows use $3$ random seeds; the re-measured TDS and the adopted D-DOIT-LBOK-S rows use $5$ fresh final-test seeds after held-out eval selection.
The TDS and D-DOIT-LBOK-S rows are re-measured on an A100-SXM4-80GB GPU;
the remaining rows are reported as A100 reruns. Seed identity alone does not
guarantee bitwise agreement across hardware or software implementations, so
comparisons also require the same checkpoint, sampler, and evaluation protocol.
We do not copy numbers from \cite{wang2025drakes}, so wall-clock and reward differences against the published table reflect our hardware and seed protocol rather than reporting drift.

% BEGIN INLINE tables/hyper_discrete_doit
\begin{table}[t]
  \centering
  \caption{\textbf{Hyperparameters of D-DOIT.}
    We report two operating points per benchmark: \textbf{Q} = quality-oriented and \textbf{S} = speed-oriented.
  Q denotes high-budget quality; S denotes LBOK-S. Selection and test
protocols are described in \cref{tab:results_drakes,tab:results_ctrldna};
diagnostic sweeps specify their own configurations.}
  \label{tab:hyperparams}
  \small
  \setlength{\tabcolsep}{3.5pt}
  \begin{tabular}{@{}ll>{\raggedright\arraybackslash}p{5.1cm}lcccc@{}}
    \toprule
    & & & & \multicolumn{2}{c}{\textbf{DRAKES}} & \multicolumn{2}{c}{\textbf{Ctrl-DNA}} \\
    \cmidrule(lr){5-6} \cmidrule(lr){7-8}
    \textbf{Sym.} & \textbf{Name} & \textbf{Role} & \textbf{Range} & \textbf{Q} & \textbf{S} & \textbf{Q} & \textbf{S} \\
    \midrule
    $L$              & Steps     & Total discrete reverse steps.                                                                                            & $\{32,64,128\}$       & $128$ & $128$ & $128$ & $128$ \\
    $M$              & Budget    & Candidate next states per guided step; linear head-evaluation cost.                                      & $\{2,\dots,64\}$      & $52$  & $9$   & $52$  & $9$ \\
    $\omega$         & Strength  & Exponent on $\hat{h}$ in \eqref{eq:tilted-resample}; for one completion, log-weight sharpness is $\omega/\beta$.          & $[0.5,\,8.0]$         & $3.0$ & $3.0$ & $8.0$ & $3.0$ \\
    $\beta$          & Temp.     & Scales reward: $\hat{h} = \exp(R/\beta)$ in \eqref{eq:h-hat-candidate}. Small $\beta$ amplifies differences.              & $[0.1,\,2.0]$         & $0.2$ & $0.2$ & $0.5$ & $0.2$ \\
    $\tau$           & Prop.\ T  & Proposal temperature \eqref{eq:tempered-proposal} for candidate diversity; changes the reference kernel.                       & $[1.0,\,3.0]$         & $3.0$ & $3.0$ & $3.0$ & $3.0$ \\
    $\gamma$         & Comp.\ T  & Completion temperature for prediction-head completion $\hat{\bx}_0$ in \eqref{eq:h-hat-candidate}; $\gamma{<}1$ sharpens signal. & $[0.5,\,2.0]$         & $1.2$ & $1.2$ & $1.2$ & $1.2$ \\
    $l^*$            & Guided high & Highest reverse index where guidance is active; guided steps satisfy $l_\mathrm{end}\leq l\leq l^*$.                  & $\{L/4, \dots, L\}$   & $L$   & $L$   & $L$   & $L$ \\
    $l_\mathrm{end}$ & Guided low  & Lowest reverse index where guidance remains active; guided steps satisfy $l_\mathrm{end}\leq l\leq l^*$.              & $\{1,4,8\}$           & $1$   & $1$   & $1$   & $1$ \\
    $s_\mathrm{split}$ & Split idx. & Number of reverse updates before each trajectory splits into $K$ copies; $0$ applies best-of-$K$ from the start. & $\{0,0.5L,\dots,L\}$  & $0$   & $105$ & $64$  & $105$ \\
    $K$              & Width     & Number of copies per split trajectory \eqref{eq:bok-select}.                                                             & $\{1,2,4,8,20\}$      & $8$   & $4$   & $20$  & $4$ \\
    $B$              & Batch     & Physical output batch; affects memory use and measured throughput.                                                                    & ---                   & $32$  & $64$  & $256$ & $256$ \\
    $n_h$ & Completions & Independent head completions per candidate; distinct from sequence length $N$. & $\{1,2,3\}$ & $1$ & $1$ & $1$ & $1$ \\
    \bottomrule
  \end{tabular}
\end{table}
% END INLINE tables/hyper_discrete_doit

\subsection{Evaluation Metrics and Baselines}
\label{app:exp:drakes:metrics}

\paragraph{Evaluation metrics.}
We use four metrics that cover predicted activity, chromatin accessibility, and sequence naturalness:
\begin{itemize}[leftmargin=*, nosep]
  \item \textbf{Predicted activity (Pred-Activity):} We use the held-out evaluation reward oracle of \cite{wang2025drakes} to estimate HepG2 enhancer activity for each generated sequence and report the median across the batch following the DRAKES convention. Higher scores indicate more functional sequences.
  \item \textbf{Chromatin accessibility (ATAC-Acc):} We use an independent binary classifier trained on ENCODE HepG2 chromatin accessibility data~\cite{encode2012integrated} to test whether the designed sequences fall in accessible chromatin regions. This is a key property of active enhancers. Higher accuracy indicates more active sequences.
  \item \textbf{3-mer pearson correlation (3-mer Corr):} We compute the Pearson correlation between the $3$-mer frequency profile of the generated sequences and that of natural HepG2 enhancers from \cite{gosai2023machine}. Higher correlation indicates that the generated sequences match the overall distribution of natural enhancer sequences.
  \item \textbf{JASPAR motif correlation (JASPAR Corr):} We compute the Spearman correlation between the transcription-factor binding motif frequencies of the generated sequences and the natural enhancer reference, scored against the JASPAR motif database~\cite{castro2022jaspar}. Higher correlation indicates better preservation of regulatory motif structure.
\end{itemize}

\paragraph{Baselines.}
We compare D-DOIT against training-free and training-based methods.
Training-free methods steer a frozen pre-trained model at inference time:
\begin{itemize}[leftmargin=*, nosep]
  \item \textbf{Pre-trained:} the unguided masked discrete diffusion model of \cite{wang2025drakes}; provides the unguided reward and naturalness reference.
  \item \textbf{CG}~\cite{nisonoff2024unlocking}: classifier guidance with discrete gradients; backpropagates through a Gumbel-Softmax relaxation and requires a differentiable reward.
  \item \textbf{SMC} and \textbf{TDS}~\cite{wu2024practical}: particle-based guidance with path likelihood ratios. TDS is a strong reported training-free baseline and provides the runtime
reference for LBOK-S; its budget is not matched to Q.
  \item \textbf{SVDD-PM}~\citep{li2025derivativefree}: posterior-mean value decoding, using the same frozen MDLM and oracle protocol, with $M=416$ for similar runtime to Q.
  \item \textbf{Frozen Best-of-$N_{\mathrm{BoN}}$:} endpoint-only selection over independent unguided samples; $N_{\mathrm{BoN}}=732$ in the Q wall-clock comparison.
\end{itemize}
Training-based methods update model parameters:
\begin{itemize}[leftmargin=*, nosep]
  \item \textbf{DRAKES}~\cite{wang2025drakes}: reward finetunes the pre-trained masked discrete diffusion model with a KL regularizer; we treat it as the strongest naturalness-preserving training-based reference.
  \item \textbf{DRAKES w/o KL}: removes the KL term to expose the over-optimization risk of unregularized reward maximization.
  \item \textbf{CFG}~\cite{ho2022classifier}: trains a conditional model from scratch.
\end{itemize}

\subsection{Nearest-neighbor and Endpoint-search Controls}
\label{app:drakes-search-controls}
The SVDD-PM comparison uses the same frozen MDLM, guidance Enformer,
held-out evaluator, paired seeds $100$--$102$, and $640$ final outputs per
method and seed. We calibrate its candidate count to $M=416$ to give a
similar wall-clock budget to Q ($L=128,M=52,K=8,s_\spl=0$).
Pred-Activity is computed as a median within each seed, then summarized by
the mean and sample standard deviation across seeds. The measured times
are $10.595$ and $11.236$ seconds per output for Q and SVDD-PM, respectively.
Q has higher means on all four metrics; the clearest separation is ATAC-Acc
($91.67\%$ versus $39.84\%$), while Pred-Activity differs by $0.15$.

% BEGIN INLINE tables/drakes_search_controls
\begin{table}[!ht]
  \centering
  \caption{\textbf{Matched-time controls.}
  We test D-DOIT-Q vs SVDD-PM and Frozen Best-of-732 at similar measured wall-clock
  cost. Bold
  marks the best mean per column.}
  \label{tab:drakes-search-controls}
  \small
  \setlength{\tabcolsep}{6pt}
  \renewcommand{\arraystretch}{1.1}
  \begin{tabular}{@{}lccc@{}}
    \toprule
    Metric & D-DOIT-Q & SVDD-PM & BoN \\
    \midrule
    s/output $\downarrow$ & $10.595$ & $11.236$ & $\mathbf{10.587}$ \\
    Activity $\uparrow$ & $\mathbf{\MainResult\pm\MainResultUncertainty}$ & $6.04\pm0.07$ & $4.70\pm0.03$ \\
    ATAC (\%) $\uparrow$ & $\mathbf{91.7\pm0.9}$ & $39.8\pm0.4$ & $43.5\pm1.3$ \\
    3-mer $\uparrow$ & $\mathbf{0.894\pm0.009}$ & $0.688\pm0.012$ & $0.881\pm0.008$ \\
    JASPAR $\uparrow$ & $0.923\pm0.002$ & $0.917\pm0.002$ & $\mathbf{0.957\pm0.002}$ \\
    \bottomrule
  \end{tabular}
\end{table}
% END INLINE tables/drakes_search_controls

\noindent Frozen Best-of-$N_{\mathrm{BoN}}$ samples independent complete sequences from
the same unguided backbone, scores them with the guidance Enformer, and
retains the highest-scoring endpoint in each group. A separate held-out
Enformer evaluates the retained sequences. Same-A100 calibration gives
$N_{\mathrm{BoN}}=732$, with $10.587$ seconds per output compared with
Q's $10.595$. \Cref{tab:drakes-search-controls} reports this high-budget
comparison independently of the lower-budget diagnostics; Q improves
activity, accessibility, and 3-mer correlation, while endpoint selection
achieves higher JASPAR correlation ($0.957$ versus $0.923$). This control
tests whether spending the same time on more complete unguided endpoints can
replace local value-guided transitions. Endpoint selection does not reproduce
the activity and accessibility gains, although it better preserves this motif
statistic. Reward evaluation is included in the measured wall-clock cost.

\paragraph{Separating model and reward budgets.}
\Cref{tab:reward-call-accounting} additionally compares diagnostic settings
with five seeds and $640$ outputs per seed. The branched $M=9$ setting uses
$1{,}970$ diffusion evaluations per output, close to Best-of-15's $1{,}920$,
but incurs substantially more oracle work. Its $1{,}777$ reward evaluations
include $1{,}773$ candidate completions and four terminal scores. Across a
$640$-output job, batching combines $1{,}137{,}280$ scored sequences into
$1{,}290$ vectorized oracle calls. These counts explain why matching only
backbone evaluations does not match wall-clock time.

% BEGIN INLINE tables/reward_call_accounting
\begin{table}[t]
  \centering
  \caption{\textbf{Backbone work, oracle work, and measured time on DRAKES.}
  Each setting uses five sampling seeds and $640$ outputs per seed.
  Pred-Activity is held-out Enformer activity; entries report means and sample
  standard deviations across seeds. The final column measures the full
  $640$-output job and must not be extrapolated to other physical batch sizes.
  The unbranched and branched rows are diagnostic settings, not substitutions
  for the separately selected main-table LBOK-S or Q results.}
  \label{tab:reward-call-accounting}
  \footnotesize
  \setlength{\tabcolsep}{4pt}
  \begin{tabular}{@{}lcccc@{}}
    \toprule
    Configuration & Pred-Activity $\uparrow$ & \makecell{Model NFE\\per output} & \makecell{Reward evaluations\\per output} & s/640 outputs \\
    \midrule
    Frozen Best-of-15 & $2.219\pm0.017$ & $1{,}920$ & $15$ & $97.8\pm1.5$ \\
    $M=9,\tau=1,K=1$ & $3.115\pm0.021$ & $1{,}280$ & $1{,}152$ & $227.0\pm1.9$ \\
    $M=9,\tau=3,K=1$ & $3.670\pm0.072$ & $1{,}280$ & $1{,}152$ & $227.8\pm2.6$ \\
    $M=9,\tau=3,K=4,s_\spl=105$ & $4.491\pm0.045$ & $1{,}970$ & $1{,}777$ & $327.8\pm4.3$ \\
    \bottomrule
  \end{tabular}
\end{table}
% END INLINE tables/reward_call_accounting

\subsection{Guidance-oracle and Evaluator Robustness}
\label{app:drakes-oracles}
We distinguish two controls, both using five sampling seeds and $640$
sequences per seed. In \cref{tab:oracle-guidance}, only the guidance oracle
changes among Enformer, CNN, and MLP; sampler settings, seeds, output count,
held-out Enformer, and biological metrics remain fixed. All three guidance
oracles improve held-out activity and accessibility over the frozen model,
but the effect size varies with the oracle. Enformer guidance gives the
highest activity and accessibility, whereas MLP guidance gives the highest
3-mer correlation in this diagnostic.

In \cref{tab:oracle-evaluators}, we instead rescore the same Enformer-guided
and unguided sequences with held-out Enformer, CNN, and MLP predictors.
D-DOIT exceeds the frozen model within each evaluator. Because their output
scales differ, cross-column score magnitudes are not directly comparable.
These controls assess predictor transfer separately from the selected
high-budget Q row and do not substitute for experimental biological assays.

% BEGIN INLINE tables/oracle_guidance
\begin{table}[t]
  \centering
  \caption{\textbf{Changing the guidance oracle.}
    Only the guidance reward $R$ changes among the D-DOIT rows; the sampler,
    seeds, output count, held-out Enformer evaluator, and other metrics are
    fixed. Entries are mean $\pm$ sample standard deviation over five seeds,
    with $640$ sequences per seed. Pred-Activity uses the held-out Enformer.
    These results are separate from the high-budget Q evaluation.}
  \label{tab:oracle-guidance}
  \small
  \setlength{\tabcolsep}{4pt}
  \resizebox{\linewidth}{!}{%
    \begin{tabular}{@{}lcccc@{}}
      \toprule
      Generation method & Pred-Activity $\uparrow$ & ATAC-Acc (\%) $\uparrow$ & 3-mer Corr $\uparrow$ & JASPAR Corr $\uparrow$ \\
      \midrule
      Enformer-guided D-DOIT & $4.491\pm0.045$ & $68.25\pm1.86$ & $0.8630\pm0.0160$ & $0.9468\pm0.0025$ \\
      CNN-guided D-DOIT & $2.203\pm0.054$ & $51.91\pm1.19$ & $0.8738\pm0.0013$ & $0.9342\pm0.0021$ \\
      MLP-guided D-DOIT & $1.544\pm0.037$ & $28.78\pm1.29$ & $0.9489\pm0.0035$ & $0.9347\pm0.0031$ \\
      Frozen pre-trained model & $0.371\pm0.018$ & $1.66\pm0.59$ & $-0.0584\pm0.0149$ & $0.2723\pm0.0220$ \\
      \bottomrule
    \end{tabular}%
  }
\end{table}
% END INLINE tables/oracle_guidance
% BEGIN INLINE tables/oracle_evaluators
\begin{table}[t]
  \centering
  \caption{\textbf{Transfer across held-out evaluators.}
    The same Enformer-guided D-DOIT and frozen-model sequences are rescored
    by three held-out predictors. Entries are mean $\pm$ sample standard
    deviation over five seeds, with $640$ sequences per seed. Enformer scores
    are Pred-Activity; CNN and MLP scores have their own scales. Compare
    methods within each column, not absolute values across columns.}
  \label{tab:oracle-evaluators}
  \small
  \setlength{\tabcolsep}{4pt}
  \resizebox{\linewidth}{!}{%
    \begin{tabular}{@{}lccc@{}}
      \toprule
      Generation method & \makecell{Pred-Activity $\uparrow$\\(held-out Enformer)} & Held-out CNN $\uparrow$ & Held-out MLP $\uparrow$ \\
      \midrule
      Enformer-guided D-DOIT & $4.491\pm0.045$ & $0.867\pm0.014$ & $0.634\pm0.015$ \\
      Frozen pre-trained model & $0.371\pm0.018$ & $0.484\pm0.014$ & $0.509\pm0.014$ \\
      \bottomrule
    \end{tabular}%
  }
\end{table}
% END INLINE tables/oracle_evaluators

\subsection{Additional Experimental Analysis}
\label{app:exp:drakes:analysis}

For the experimental results in \cref{tab:results_drakes}, we provide the additional detailed analysis below.

\Cref{sec:exp:drakes} reports the headline TDS comparisons.
We add two pieces of detail here.
First, the D-DOIT-LBOK-S vs.\ TDS gap on ATAC-Acc is $+25.0$ percentage points ($40.0\% \to 65.0\%$), while Pred-Activity is matched ($4.56$ vs.\ $4.58$) with lower reported seed variability; this is descriptive, not a variance-test result.
The same operating point also lifts ATAC-Acc from $39.9\%$ to $65.0\%$ over SMC and JASPAR Corr from $0.756$ to $0.948$ over SMC.
Second, high-budget D-DOIT-Q over TDS gives $+1.61$ in Pred-Activity, $+51.7$ percentage points in ATAC-Acc, $+0.077$ in 3-mer Corr, and $+0.124$ in JASPAR Corr.

Compared with the training-based DRAKES, D-DOIT-Q achieves higher Pred-Activity ($\MainResult$ vs.\ $5.61$), higher 3-mer Corr ($0.894$ vs.\ $0.887$), and higher JASPAR Corr ($0.923$ vs.\ $0.911$) without any backbone update.
DRAKES remains higher on ATAC-Acc ($92.5\%$ vs.\ $91.7\%$), so we do not claim an all-metric win.
The unregularized variant DRAKES w/o KL exposes the cost of training-based reward maximization: it gains $+0.83$ in Pred-Activity over DRAKES but loses $0.580$ in 3-mer Corr ($0.887 \to 0.307$) and $0.354$ in JASPAR Corr ($0.911 \to 0.557$).
D-DOIT-Q stays near the DRAKES high-naturalness regime.
The frozen model supplies candidate proposals without retraining. The exact
reference transform reweights its induced distribution; the practical
sampler's temperatures and selection operations change that reference law.
\Cref{fig:pareto_front} plots utility and naturalness for the original
benchmark comparison set. The separately time-calibrated SVDD and endpoint
controls are reported in \cref{tab:drakes-search-controls}, rather than
pooled into a claim of an exhaustive compute--quality frontier.
D-DOIT-LBOK-S anchors the practical moderate-budget side of this frontier.
D-DOIT-Q occupies the high-utility, high-naturalness corner when inference-time budget is deliberately relaxed, with ATAC-Acc as the remaining gap to DRAKES.

\subsection{Held-out Final-test Check for D-DOIT-LBOK-S}
\label{app:drakes-lbok-heldout}

We select the DRAKES LBOK-S configuration on eval seeds $42,43,44$ and report it once on fresh final-test seeds $105,106,107,108,109$.
The fixed final configuration uses $M{=}9$, $K{=}4$, $s_\mathrm{split}{=}105$, $\omega{=}3.0$, $\beta{=}0.2$, $\tau{=}3.0$, and $\gamma{=}1.2$.
It exceeds the TDS thresholds on ATAC-Acc, 3-mer Corr, and JASPAR Corr, and has a similar mean Pred-Activity to TDS; no statistical equivalence test is implied.

\begin{table}[ht]
  \centering
  \caption{\textbf{Held-out final-test check for D-DOIT-LBOK-S.}
    Targets are the TDS row in \cref{tab:results_drakes}.
  The labels compare reported means with the reference values; they are not
significance or statistical-equivalence decisions.}
  \label{tab:drakes-lbok-heldout-final}
  \small
  \begin{tabular}{@{}lccc@{}}
    \toprule
    Metric & TDS reference & D-DOIT-LBOK-S final test & Mean comparison \\
    \midrule
    Pred-Activity & $4.58$ & $4.560(0.040)$ & similar mean \\
    ATAC-Acc & $40.0\%$ & $65.03(1.43)\%$ & pass \\
    3-mer Corr & $0.817$ & $0.8511(0.0165)$ & pass \\
    JASPAR Corr & $0.798$ & $0.9477(0.0016)$ & pass \\
    \bottomrule
  \end{tabular}
\end{table}

\subsection{Held-out Selection for High-budget D-DOIT-Q}
\label{app:drakes-heldout}

To avoid tuning directly on the final reported seeds, we separate the DRAKES high-budget update into three seed groups.
Development seeds $0,1,2$ identify promising sampler families only.
Held-out eval seeds $42,43,44$ select the final high-budget D-DOIT-Q configuration ($M{=}52$, $K{=}8$, $s_\mathrm{split}{=}0$, $\omega{=}3.0$, $\beta{=}0.2$, $\tau{=}3.0$, $\gamma{=}1.2$).
Final-test seeds $100,101,102$ are then evaluated once and reported as the held-out final numbers in \cref{tab:results_drakes}.

\begin{table}[ht]
  \centering
  \caption{\textbf{Held-out final-test check for D-DOIT-Q.}
    Targets are the DRAKES row in Table~\ref{tab:results_drakes}. The final-test
    result passes Pred-Activity, 3-mer Corr, and JASPAR Corr, but misses the
  ATAC-Acc target on the held-out evaluation.}
  \label{tab:drakes-heldout-final}
  \small
  \begin{tabular}{@{}lccc@{}}
    \toprule
    Metric & DRAKES reference & D-DOIT-Q final test & Mean comparison \\
    \midrule
    Pred-Activity & $5.61$ & $\MainResult(\MainResultUncertainty)$ & pass \\
    ATAC-Acc & $92.5\%$ & $91.7(0.9)\%$ & miss \\
    3-mer Corr & $0.887$ & $0.894(0.009)$ & pass \\
    JASPAR Corr & $0.911$ & $0.923(0.002)$ & pass \\
    \bottomrule
  \end{tabular}
\end{table}

% BEGIN INLINE figures/fig_pareto_front
\begin{figure}[!ht]
  \centering
  \includegraphics[width=0.85\linewidth]{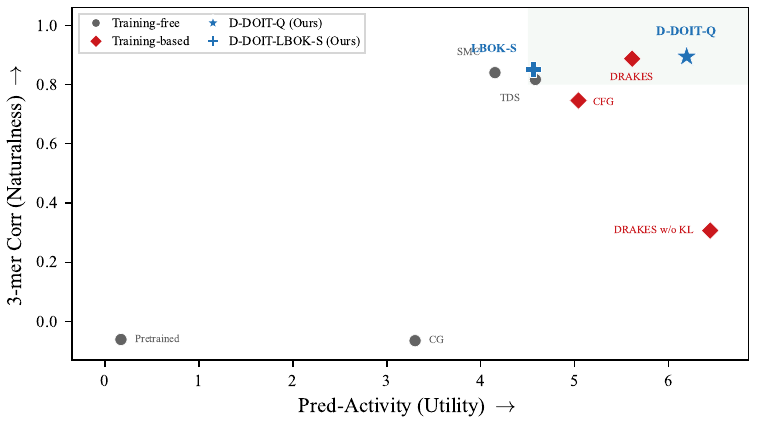}
  \caption{\textbf{Utility-naturalness trade-off.}
    We plot Pred-Activity (median) against 3-mer Corr; one point per method.
    Green shading marks the high-utility, high-naturalness region.
    This view covers the original benchmark rows; the separately time-calibrated
    SVDD and endpoint comparisons are in \cref{tab:drakes-search-controls}.
  ATAC-Acc is reported separately in \cref{tab:results_drakes,tab:drakes-heldout-final}.}
  \label{fig:pareto_front}
\end{figure}
% END INLINE figures/fig_pareto_front

\FloatBarrier
\clearpage
\section{Experimental Details of Cell-type-specific Enhancer Design}
\label{app:exp:ctrldna}

In this section, we provide the experimental details and additional experimental analysis for the cell-type-specific enhancer design task in \cref{sec:exp:ctrldna}.

\subsection{Dataset and Training Details}
\label{app:exp:ctrldna:setup}

\paragraph{Dataset.}
We use the $200$\,bp human enhancer corpus of \citet{chen2025ctrldna}, covering three cell lines: HepG2, K562, and SK-N-SH.
Data preprocessing follows the regLM protocol of \citet{lal2024reglm}, as in \citet{chen2025ctrldna}.
We use the Ctrl-DNA enhancer task only for its data, cell-type-specific reward oracles, and evaluation metrics.
The training procedure and backbone architecture come from our own setup, which we describe in the next paragraphs.

\paragraph{Reward oracle setup.}
We use the cell-type-specific Enformer-based oracles~\cite{avsec2021effective} from \citet{chen2025ctrldna}, which the authors train on the same corpus.
Unlike constrained-RL baselines with explicit thresholds and policy retraining, D-DOIT is an inference-time method and accepts a single black-box scalar reward.
For target cell line $c$ and off-target cell lines $\calC_{-c}$, we set
\begin{align}
  \label{eq:ctrldna-reward}
  R(\bx_0) = R_c(\bx_0) - \lG \cdot \max_{c' \in \calC_{-c}} R_{c'}(\bx_0),
\end{align}
where each $R_{c}$ is the cell-type-specific Enformer oracle of \citet{chen2025ctrldna}.
The $\max$ over off-targets suppresses the worst case during sampling.
The evaluation $\Delta R$ averages over off-targets to follow the Ctrl-DNA convention.
We sweep $\lG$ on held-out eval seeds disjoint from the final-test seeds.
The final high-budget row uses the task-default penalties selected for the Q operating point, and the final low-budget LBOK-S row uses $\lG=2.0$ selected by the low-budget eval sweep.

\paragraph{Pre-training setup.}
We train a new masked discrete diffusion model~\cite{sahoo2024simple} on the Ctrl-DNA enhancer corpus.
The HyenaDNA backbone used by \citet{chen2025ctrldna} is autoregressive and does not fit our discrete $h$-transform.
D-DOIT operates on this frozen model without updating its parameters during sampling. The same checkpoint serves all three cell lines and both operating points.

\paragraph{Sampling setup.}
We use the two operating points from \cref{sec:operating-points}.
\Cref{tab:hyperparams} lists the exact hyperparameter values per benchmark.
For the D-DOIT Ctrl-DNA rows, we choose fixed configs on eval seeds $42,43,44$ and report final-test seeds $100,101,102,103,104$.
We use $5$ random seeds with batch size $256$ per seed and run a separate set for each of the three target cell lines.
Pre-trained, CG, SMC, and TDS are the rerun training-free comparison rows.
Other baselines come from the reported results in \citet{chen2025ctrldna} Table~1 under the $5$-seed, $\delta = 0.5$ setting.

\subsection{Evaluation Metrics and Baselines}
\label{app:exp:ctrldna:metrics}

\paragraph{Evaluation metrics.}
We use three metrics that cover cell-type specificity, motif structure, and set-level diversity:
\begin{itemize}[leftmargin=*, nosep]
  \item \textbf{Cell-specific reward difference ($\Delta R$):} We compute the difference between the target-cell oracle score and the average of the off-target oracle scores following the Ctrl-DNA convention. Higher scores indicate stronger cell-type specificity.
  \item \textbf{Motif correlation (Motif Corr):} We compute the Pearson correlation between the transcription-factor binding-site frequencies of the generated sequences and the cell-type-specific natural enhancer set from \cite{chen2025ctrldna} (top half by target activity, bottom half by off-target activity). Higher correlation indicates that the generated sequences match the regulatory grammar of natural cell-specific enhancers.
  \item \textbf{Set-level diversity (Diversity):} We compute the Shannon entropy across the generated sequence set with the official Ctrl-DNA evaluation script~\cite{chen2025ctrldna}. Higher entropy indicates that the sampler does not collapse onto a small set of high-reward sequences.
\end{itemize}
$\Delta R$ and Motif Corr appear in \cref{tab:results_ctrldna}.
Diversity appears in \cref{fig:diversity_ctrldna}.
We omit Median because \cite{chen2025ctrldna} mentions it in text but does not report numerical values.

\paragraph{Baselines.}
We compare D-DOIT against the baselines from \citet{chen2025ctrldna} and the same training-free guidance methods as in \cref{app:exp:drakes:metrics}:
\begin{itemize}[leftmargin=*, nosep]
  \item \textbf{Pre-trained}, \textbf{CG}~\cite{nisonoff2024unlocking}, \textbf{SMC}, \textbf{TDS}~\cite{wu2024practical}: training-free inference-time methods on the same frozen masked discrete diffusion model. These share the frozen backbone with D-DOIT. Frozen Best-of-708 additionally
provides the same-A100 endpoint-search control at Q's measured time budget.
  \item \textbf{regLM}~\cite{lal2024reglm}: supervised generative baseline that the authors train on the cell-type-specific enhancer corpus.
  \item \textbf{PPO, TACO, PPO-Lagrangian}~\cite{chen2025ctrldna}: RL-based methods that retrain the generator against the specificity reward.
  \item \textbf{AdaLead, PEX, BO, CMA-ES}~\cite{chen2025ctrldna}: directed-evolution methods that search the sequence space directly without any generative backbone.
\end{itemize}

\subsection{Training-free Baseline Tuning Screen}
\label{app:exp:ctrldna:baseline-tuning}

To check that the Ctrl-DNA training-free baselines are not weak only because of arbitrary default hyperparameters, we run a bounded seed-42 screen for CG, SMC, and TDS on HepG2, K562, and SK-N-SH.
\Cref{tab:ctrldna_baseline_tuning} summarizes all 13 screened configurations at the family level and names the selected follow-up configuration for each baseline family.
Across this screen, CG remains a high-motif, near-zero-reward baseline; SMC obtains only small reward movement and degrades motif/diversity; and TDS remains weak on the composite specificity reward.
Because this is a one-seed screen rather than a final-test protocol, we use it as baseline-tuning provenance instead of replacing the main-table baseline rows reported above.

% BEGIN INLINE tables/ctrldna_baseline_tuning
\begin{table}[t]
  \centering
  \renewcommand{\arraystretch}{1.12}
  \setlength{\tabcolsep}{3.2pt}
  \caption{\textbf{Ctrl-DNA training-free baseline tuning screen.}
    We run a bounded seed-42 screen over 13 CG, SMC, and TDS hyperparameter settings on all three Ctrl-DNA cell tasks.
    The screen documents that these training-free baselines remain weak after tuning: CG preserves motifs without improving reward/specificity, SMC gains little reward while degrading motif/diversity, and TDS stays near zero reward movement.
  The selected configs are provenance for possible follow-up evaluation, not final-test rows.}
  \label{tab:ctrldna_baseline_tuning}
  \footnotesize
  \begin{tabular}{@{}l>{\raggedright\arraybackslash}p{2.75cm}>{\raggedright\arraybackslash}p{2.45cm}>{\raggedright\arraybackslash}p{3.0cm}>{\raggedright\arraybackslash}p{2.85cm}@{}}
    \toprule
    Method & Seed-42 screen & Selected config & Screen outcome & Takeaway \\
    \midrule
    CG  & guidance scale $\{20,50,100,200\}$ & CG\_G20: guidance scale $20$ & $\Delta R \in [-0.002,0.000]$; specificity $\le {-}0.007$; motif corr $0.903$--$0.916$ & All screened CG settings preserve motifs but do not improve reward or specificity. \\
    SMC & beta values: $0.3$, $0.5$, $0.75$, $1.0$, $1.5$ & SMC\_B05: beta $0.5$ & Best $\Delta R=0.011$ at beta $1.5$, with motif corr $0.388$ and diversity $0.928$; selected beta $0.5$ has $\Delta R=0.006$. & Small reward gains come with much weaker motif/diversity, so SMC is not competitive. \\
    TDS & $g$/beta pairs: $2/0.75$, $5/0.75$, $5/1.0$, $10/1.0$ & TDS\_G5\_B10: $g=5$, beta $1.0$ & $\Delta R \in [-0.002,0.001]$; specificity $<0$ for all settings; best motif corr $0.510$. & TDS remains weak after this bounded tuning pass. \\
    \bottomrule
  \end{tabular}
\end{table}
% END INLINE tables/ctrldna_baseline_tuning

\subsection{Additional Experimental Analysis}
\label{app:exp:ctrldna:analysis}

For the experimental results in \cref{tab:results_ctrldna}, we provide the additional detailed analysis below.

\Cref{tab:results_ctrldna} records positive mean specificity for the two
D-DOIT configurations relative to the originally reported CG, SMC, and TDS
rows. The time-calibrated endpoint control additionally attains positive
specificity, with mean $\Delta R=0.108$.
Compared with TDS, D-DOIT-LBOK-S improves $\Delta R$ from $0.002$ to $0.033$ and improves Motif Corr from $0.573$ to $0.648$, at about $0.94\times$ the rerun TDS wall-clock in the Ctrl-DNA timing logs.
CG fails to move the specificity reward at all ($\Delta R \approx 0$, indistinguishable from Pre-trained), and SMC sits at $\Delta R = 0.000$ with Motif Corr $0.587$.
Q reaches $\Delta R=0.119\pm0.063$ and Motif Corr $0.498\pm0.070$.
Its $5.43$ seconds per output is close to Frozen Best-of-708's $5.51$
on the same A100. The small mean advantage over endpoint selection is not
claimed statistically significant. \Cref{tab:results_ctrldna_per_cell}
provides the per-cell breakdown for the original guidance comparisons.

Compared with the remaining training-based baselines, D-DOIT-Q stays behind regLM on $\Delta R$ ($0.119$ vs.\ $0.16$).
Because regLM uses a different generative backbone and training procedure,
this gap does not isolate the effect of inference-time adaptation.
On Motif Corr, D-DOIT-Q at $0.498$ exceeds every remaining training-based and evolutionary baseline (PPO-Lagrangian $0.42$, AdaLead $0.36$).
These means describe motif preservation at the reported specificity levels;
they do not identify which mechanism caused the difference.
Pre-trained has high Motif Corr ($0.937$) at $\Delta R\approx0$, providing
an unguided naturalness reference rather than a mathematical ceiling.
LBOK-S has higher Motif Corr ($0.648$) than Q and Frozen Best-of-708, while
achieving lower specificity.
D-DOIT-Q ($0.498$) trades motif fidelity for higher $\Delta R$, so we use it as the high-specificity operating point rather than a motif-preserving one.

\Cref{fig:diversity_ctrldna} compares diversity and specificity within the
original guidance baseline set; the endpoint control's reward and motif
scores are compared separately in \cref{tab:results_ctrldna}.
SMC and TDS gain little $\Delta R$ and collapse Shannon entropy to $\approx 1.5$.
D-DOIT-Q occupies the high-$\Delta R$, high-diversity region across all three cell lines.
\Cref{tab:results_ctrldna_per_cell} reports the per-cell breakdown of $\Delta R$, Motif Corr, and Diversity.

% BEGIN INLINE figures/fig_diversity_ctrldna
\begin{figure*}[!ht]
  \centering
  \includegraphics[width=\textwidth]{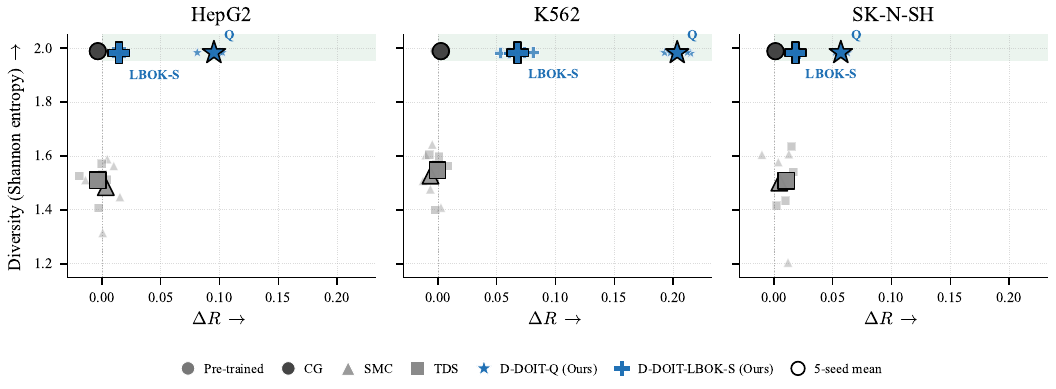}
  \caption{\textbf{Specificity vs.\ diversity on Ctrl-DNA.}
    Each panel is one target cell line.
    Small markers are single seeds, and large black-edged markers are the corresponding $5$-seed means.
    The $x$-axis is cell-specific reward gain $\Delta R$; the $y$-axis is set-level Shannon entropy from the Ctrl-DNA evaluation script~\cite{chen2025ctrldna}.
    The green region marks positive $\Delta R$ with high diversity.
  D-DOIT moves into this region, whereas Pre-trained/CG remain near zero reward movement and SMC/TDS lose diversity.}
  \label{fig:diversity_ctrldna}
\end{figure*}
% END INLINE figures/fig_diversity_ctrldna

The headline averages in \cref{tab:results_ctrldna} are the cross-cell means of these rows.
Both D-DOIT operating points achieve their largest $\Delta R$ on K562, consistent with K562 carrying the largest reward headroom in the Ctrl-DNA setup, with smaller positive gains on HepG2 and SK-N-SH.
D-DOIT-LBOK-S wins on Motif Corr for every cell line among training-free methods that move the reward.
Both D-DOIT operating points keep Diversity high ($\approx 1.98$) on all three cells.
SMC and TDS drop to $\approx 1.5$.

% BEGIN INLINE tables/results_ctrldna_per_cell
\begin{table}[t]
  \centering
  \small
  \renewcommand{\arraystretch}{1.2}
  \caption{\textbf{Per-cell-type Ctrl-DNA results.}
    We break down $\Delta R$, Motif Corr, and Diversity by target cell line.
    Parentheses denote standard deviation across $5$ random seeds.
  \textbf{Bold} = best, \underline{underline} = second best, per column within each cell line.}
  \label{tab:results_ctrldna_per_cell}
  \begin{tabular}{@{}llccc@{}}
    \toprule
    Cell line & Method & $\Delta R$\,$\uparrow$ & Motif Corr\,$\uparrow$ & Diversity\,$\uparrow$ \\
    \midrule
    \multirow{6}{*}{HepG2}
    & Pre-trained                     & $-0.004(0.002)$          & $\underline{0.937(0.009)}$ & $\mathbf{1.988(0.001)}$ \\
    & CG                             & $-0.004(0.002)$          & $\mathbf{0.938(0.009)}$    & $\mathbf{1.988(0.001)}$ \\
    & SMC                            & $+0.003(0.011)$          & $0.582(0.094)$             & $1.484(0.109)$ \\
    & TDS                            & $-0.004(0.009)$          & $0.606(0.087)$             & $1.509(0.061)$ \\
    & \textbf{D-DOIT-LBOK-S (Ours)}  & $\underline{+0.014(0.003)}$ & $0.667(0.032)$          & $1.981(0.002)$ \\
    & \textbf{D-DOIT-Q (Ours)}       & $\mathbf{+0.095(0.008)}$ & $0.558(0.048)$             & $\underline{1.982(0.001)}$ \\
    \midrule
    \multirow{6}{*}{K562}
    & Pre-trained                     & $+0.003(0.003)$          & $\underline{0.937(0.009)}$ & $\mathbf{1.988(0.001)}$ \\
    & CG                             & $+0.002(0.003)$          & $\mathbf{0.938(0.009)}$    & $\mathbf{1.988(0.001)}$ \\
    & SMC                            & $-0.006(0.006)$          & $0.613(0.133)$             & $1.527(0.095)$ \\
    & TDS                            & $-0.000(0.006)$          & $0.538(0.072)$             & $1.546(0.084)$ \\
    & \textbf{D-DOIT-LBOK-S (Ours)}  & $\underline{+0.068(0.010)}$ & $0.620(0.031)$          & $\underline{1.982(0.001)}$ \\
    & \textbf{D-DOIT-Q (Ours)}       & $\mathbf{+0.204(0.009)}$ & $0.421(0.037)$             & $\underline{1.982(0.001)}$ \\
    \midrule
    \multirow{6}{*}{SK-N-SH}
    & Pre-trained                     & $+0.001(0.001)$          & $\underline{0.937(0.009)}$ & $\mathbf{1.988(0.001)}$ \\
    & CG                             & $+0.001(0.001)$          & $\mathbf{0.938(0.010)}$    & $\mathbf{1.988(0.001)}$ \\
    & SMC                            & $+0.004(0.009)$          & $0.566(0.133)$             & $1.501(0.170)$ \\
    & TDS                            & $+0.010(0.006)$          & $0.573(0.075)$             & $1.508(0.088)$ \\
    & \textbf{D-DOIT-LBOK-S (Ours)}  & $\underline{+0.018(0.002)}$ & $0.658(0.054)$          & $1.982(0.001)$ \\
    & \textbf{D-DOIT-Q (Ours)}       & $\mathbf{+0.057(0.003)}$ & $0.517(0.035)$             & $\underline{1.983(0.001)}$ \\
    \bottomrule
  \end{tabular}
\end{table}
% END INLINE tables/results_ctrldna_per_cell

The final Ctrl-DNA Q operating point uses $M=52$, $K=20$, and
$s_\spl=64$, whereas DRAKES Q uses $M=52$, $K=8$, and $s_\spl=0$.
A preceding Ctrl-DNA eval sweep also considered $M=8,K=20,s_\spl=64$,
which obtained config-level $\Delta R=0.117$, and a
$M=32,K=8,s_\spl=0$ setting with $\Delta R=0.105$ at roughly three times
the runtime. These exploratory settings differ from the final Q comparison
and vary several parameters together. They illustrate alternative budget
allocations, not an isolated causal effect of branching. The endpoint
control shows that final selection alone already obtains much of the mean
specificity gain under the calibrated budget.

% BEGIN INLINE tables/results_ctrldna_evo
\begin{table}[t]
  \centering
  \renewcommand{\arraystretch}{1.25}
  \setlength{\tabcolsep}{14pt}
  \caption{\textbf{Evolutionary-search baselines on Ctrl-DNA.}
    We report $\Delta R$ and Motif Corr averaged across the three target cell lines for the four directed-evolution baselines reproduced from \cite{chen2025ctrldna}.
  The training-free and training-based comparisons appear in \cref{tab:results_ctrldna}.}
  \label{tab:results_ctrldna_evo}
  \begin{tabular}{@{}lcc@{}}
    \toprule
    Method & $\Delta R$\,$\uparrow$ & Motif Corr\,$\uparrow$ \\ \midrule
    AdaLead                        & $0.05(0.05)$              & $0.36(0.08)$ \\
    PEX                            & $0.09(0.02)$              & $0.23(0.04)$ \\
    BO                             & $0.01(0.02)$              & $0.11(0.11)$ \\
    CMA-ES                         & $0.01(0.01)$              & $0.08(0.05)$ \\
    \bottomrule
  \end{tabular}
\end{table}
% END INLINE tables/results_ctrldna_evo

\subsection{Wall-clock-matched Endpoint Control}
\label{app:ctrldna-search-controls}
Frozen Best-of-708 generates complete unguided sequences from the frozen
backbone and retains the best reward-scored endpoint in each group. We
calibrate its group size on the same A100 used for Q
($L=128,M=52,K=20,s_\spl=64$). At $5.51$ versus $5.43$ seconds per
output, the methods give specificity $0.108\pm0.0281$ versus
$0.119\pm0.063$, and Motif Corr $0.490\pm0.165$ versus $0.498\pm0.070$.
The means favor Q slightly, but the differences are modest and are not
interpreted as statistically significant. Training-based rows remain
cross-backbone references, separate from this inference-time comparison.

\FloatBarrier
\clearpage
\section{Ablation Study}
\label{sec:ablation}

We conduct ablation studies on the DRAKES HepG2 enhancer task to understand the behavior of D-DOIT.
We examine the guidance window $l^*$, guidance strength $\omega$ and effective
sample size (ESS), proposal and completion temperatures $\tau,\gamma$, and
candidate budget $M$. We also study completion count, direct-versus-rollout
value estimates, soft-versus-argmax selection, split position, and reward
trajectories (\cref{tab:diagnostic_completions,tab:diagnostic_proxy,tab:diagnostic_selection,tab:diagnostic_split,tab:diagnostic_suffix,tab:diagnostic_trajectory}).
The window, strength, temperature, and candidate-budget sweeps use a single
held-out seed unless noted otherwise; the additional diagnostics state their
sampling protocols separately. Final guidance reward $R$ is scored by the
sampling oracle and is distinct from held-out Pred-Activity. Completion-proxy
trajectories use the held-out evaluator but average scores of completed
partial states rather than the benchmark's within-seed batch median.
We re-run the best $\gamma$-sweep configuration across the $3$ main evaluation seeds and report it in \cref{tab:ablation_main}.
These studies show how D-DOIT balances reward optimization, sequence naturalness, and computational cost.

The candidate-budget sweep shows that increasing $M$ mainly improves functional utility, while naturalness metrics remain comparatively stable. The table surfaces three consequential design choices: the guidance window $l^{*}$ exhibits a phase transition for naturalness; the completion temperature $\gamma$ controls prediction-head completion quality; and the candidate budget $M$ controls utility.

% BEGIN INLINE tables/ablation_main
\begin{table}[ht]
  \centering
  \renewcommand{\arraystretch}{1.25}
  \caption{\textbf{Headline ablation findings.}
  Each row varies one hyperparameter around the DRAKES ablation reference setting. We report sweep-margin endpoints from a single-seed sweep; full sweeps appear in \cref{sec:ablation}.}
  \label{tab:ablation_main}
  \begin{tabular}{@{}lccc@{}}
    \toprule
    Axis & Setting & 3-mer Corr & Pred-Activity / ATAC (\%) \\
    \midrule
    $l^{*}$  & $64 \to 128$  & $-0.06 \to 0.894$ & $3.61 \to 4.77$ \,/\, $33.6 \to 72.7$ \\
    $\gamma$ & $1.2 \to 0.6$ & $0.846 \to 0.905$ & $4.53 \to 4.72$ \,/\, $72.7 \to 66.4$ \\
    $M$      & $8 \to 32$    & $+9\%$            & $+31\%$ \,/\, $+29$\,pp \\
    \bottomrule
  \end{tabular}
\end{table}
% END INLINE tables/ablation_main

\paragraph{Runtime analysis.}
\label{sec:ablation:runtime}
\Cref{sec:exp:runtime} reports the runtime comparison and its measurement
conditions. The runtime pattern follows the NFE accounting in
\cref{app:nfe-breakdown}: larger $M$ and earlier best-of-$K$ splitting increase
candidate-state evaluations and oracle-scored completions. Increasing $n_h$
also increases completion scoring, as quantified in
\cref{tab:diagnostic_completions}. LBOK-S is the lower-budget operating
point, whereas Q deliberately trades greater recurring inference cost for
quality; neither setting implies cost dominance over training-based methods.

\subsection{Effect of Guidance Window \texorpdfstring{$l^*$}{l*}}
\label{sec:ablation:lstar}

% BEGIN INLINE figures/fig_lstar_phase_transition
\begin{figure}[ht]
  \centering
  \includegraphics[width=1\textwidth]{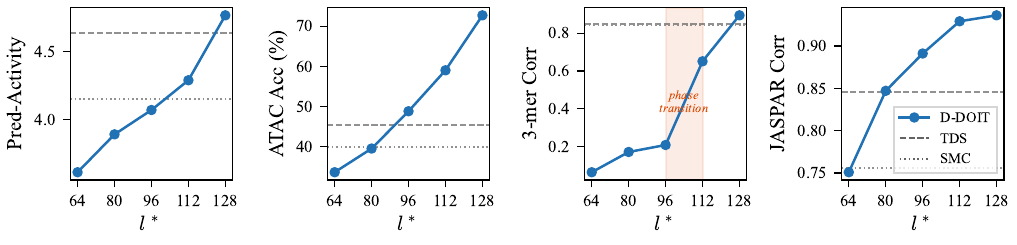}
  \caption{\textbf{Guidance-window sweep.}
  Expanding the guided window improves utility and sharply recovers naturalness between $l^*=96$ and $l^*=112$.}
  \label{fig:lstar_phase_transition}
\end{figure}
% END INLINE figures/fig_lstar_phase_transition

We study the effect of the guidance window $l^*$, the highest reverse-time index where Doob's $h$-transform is active in the discrete sampler.
A smaller $l^*$ saves $(L-l^*)\cdot M$ neural-function evaluations per sequence
but omits guidance at more heavily masked states, where limited revealed
context may affect the prediction-head proxy.
\Cref{fig:lstar_phase_transition} reports the four DRAKES metrics as we sweep $l^* \in \{10, 20, \ldots, 128\}$.

The results show a phase-transition-like behavior in naturalness.
All configurations with $l^* \leq 50$ produce negative 3-mer Pearson correlations.
A sharp jump occurs between $l^*{=}96$ (3-mer $= 0.21$) and $l^*{=}112$ ($0.65$).
At $l^*{=}128$, 3-mer reaches $0.894$, a $14\times$ improvement over $l^*{=}64$.
Utility metrics improve across the same range without a sharp transition.
In this sweep, broad guidance windows are associated with better global
sequence-composition statistics; this observation does not isolate a causal
mechanism or establish a requirement for other rewards.
Utility metrics, in contrast, recover from a narrower window.
We use $l^*{=}128$ as the default setting.

\subsection{Effect of Guidance Strength \texorpdfstring{$\omega$}{omega} and ESS Collapse}
\label{sec:ablation:omega}

We study the effect of the guidance strength $\omega$ in \eqref{eq:tilted-resample}.
A larger $\omega$ sharpens the transformed transition toward high-reward candidates.
A smaller $\omega$ preserves more of the pre-trained kernel.

% BEGIN INLINE figures/fig_ess_collapse
\begin{wrapfigure}{R}{0.42\linewidth}
  \centering
  \vspace{-1em}\includegraphics[width=\linewidth]{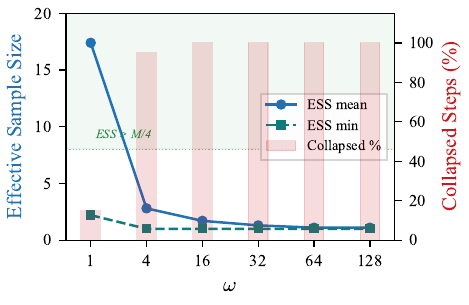}
  \vspace{-2em}
  \caption{\textbf{ESS diagnostic.}
  ESS and collapse fraction across $\omega$; weighted selection collapses to near-greedy at $\omega \geq 4$.}
  \label{fig:ess_collapse}
  \vspace{-1em}
\end{wrapfigure}
% END INLINE figures/fig_ess_collapse
\noindent\Cref{fig:ess_collapse} reports the per-step Effective Sample Size (ESS) of the candidate weights and the fraction of steps where ESS collapses below $M/4$.
The results show that ESS collapses to near $1$ at $\omega \geq 4$.
$95\%$ of guided steps have ESS $< M/4$ at $\omega{=}4$, and $100\%$ at $\omega \geq 16$.
A direct mechanism for concentration is reward-range amplification: $\omega{=}4$ amplifies a candidate reward range of $\approx 3$ nats to $\geq 12$ nats and saturates the softmax.
High-quality results remain achievable despite this collapse when proposal-level diversity (via $\tau$) still places a strong candidate in the pool.
This motivates joint tuning of $\omega$ and $\tau$, but does not imply that
larger $\tau$ recovers importance-weight accuracy. The concentration depends
on the reward distribution and $\omega/\beta$, so these diagnostic thresholds
cannot be transferred unchanged to other configurations.

\subsection{Effect of Proposal and Completion Temperatures \texorpdfstring{$\tau, \gamma$}{tau, gamma}}
\label{sec:ablation:tau_gamma}

We then study the joint effect of the proposal temperature $\tau$ in \eqref{eq:tempered-proposal} and the completion temperature $\gamma$ in \eqref{eq:h-hat-candidate}.
A larger $\tau$ flattens the proposal and produces more diverse candidates.
A smaller $\gamma$ sharpens the prediction-head completion toward the model mode.
\Cref{fig:tau_gamma_heatmap} reports the four DRAKES metrics across the joint sweep at $M{=}32$.

The results show a clear separation of roles.
Functional activity peaks at higher proposal diversity.
Marginal analysis shows Pred-Activity improves from $3.89$ at $\tau{=}2.5$ to $4.07$ at $\tau{=}5.0$, and ATAC-Acc from $53.7\%$ to $69.4\%$.
Naturalness peaks at sharper prediction-head completions.
Reducing $\gamma$ from $1.2$ to $0.6$ improves averaged 3-mer Corr from $0.846$ to $0.905$.
At the strongest $\gamma$-sweep configuration ($M{=}32$, $\tau{=}3.0$, $\gamma{=}0.6$), 3-mer Corr reaches $0.949 \pm 0.006$ across $3$ seeds, above DRAKES at $0.887$.
This sweep identifies the naturalness-preserving region.
The final held-out DRAKES Q row in \cref{tab:results_drakes} uses a larger eval-selected inference budget and trades some naturalness for much higher Pred-Activity and ATAC-Acc while remaining above the DRAKES 3-mer and JASPAR targets.
Naturalness metrics saturate around $\tau{=}3.5$--$4.0$, suggesting a trade-off between naturalness and accessibility at extreme proposal temperatures.
We use $\tau{=}3.0$ and $\gamma{=}0.6$ as the ablation default for naturalness-preserving sweeps.

% BEGIN INLINE figures/fig_tau_gamma_heatmap
\begin{figure}[!htbp]
  \centering
  \includegraphics[width=\linewidth]{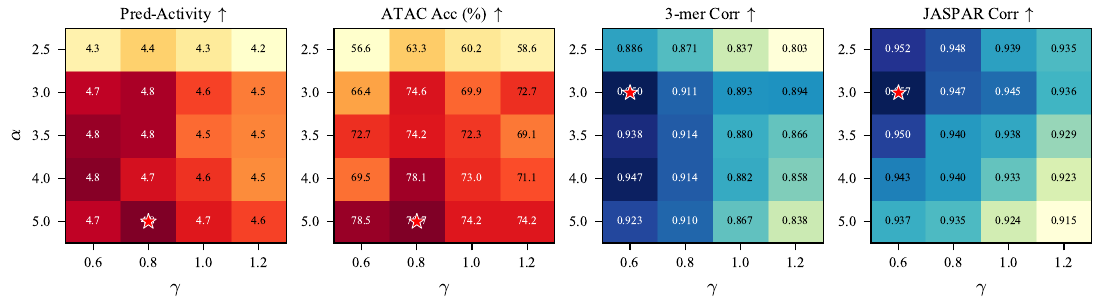}
  \caption{\textbf{Interaction of proposal and completion temperatures.}
    We sweep the proposal temperature $\tau$ and the completion temperature $\gamma$ at $M=32$.
    Naturalness peaks at sharper prediction-head completions (smaller $\gamma$), while functional activity peaks at higher proposal diversity (larger $\tau$).
  The red star marks the ablation reference configuration used before the final held-out Q budget selection.}
  \label{fig:tau_gamma_heatmap}
\end{figure}
% END INLINE figures/fig_tau_gamma_heatmap

\subsection{Effect of Candidate Budget \texorpdfstring{$M$}{M}}
\label{sec:ablation:m}

We study the effect of the Monte Carlo candidate budget $M$.
The candidate count controls per-step work and coverage of possible next
states. This sweep measures quality changes, not kernel error or convergence
to the original tilted distribution.
\Cref{fig:m_budget_ablation} reports the four DRAKES metrics across $M \in \{8, 16, 32\}$.

% BEGIN INLINE figures/fig_m_budget_ablation
\begingroup
\begin{figure}[!htbp]
  \centering
  \includegraphics[width=\linewidth]{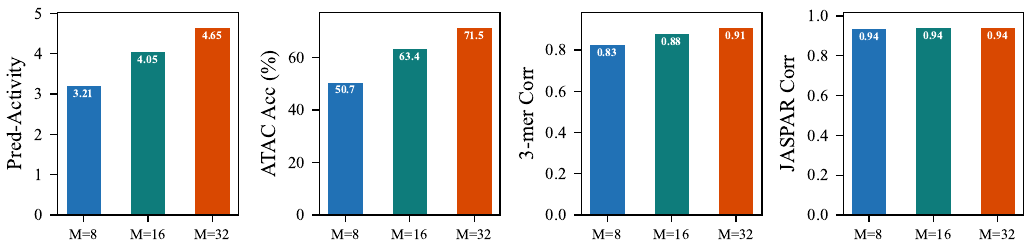}
  \caption{\textbf{Candidate-budget sweep.}
    We vary the Monte Carlo candidate budget $M$ on the DRAKES HepG2 task and report the same four metrics.
  Increasing $M$ gives direct gains in functional utility, while naturalness changes much less across the sweep.}
  \label{fig:m_budget_ablation}
\end{figure}
\endgroup
% END INLINE figures/fig_m_budget_ablation
\FloatBarrier

The results show that fewer candidates reduce functional utility but leave sequence naturalness almost unchanged.
Reducing $M$ from $32$ to $8$ drops Pred-Activity by $31\%$ and ATAC-Acc by $29\%$.
3-mer Corr drops by $9\%$ and JASPAR Corr by less than $1\%$.
The temperatures and the guidance window drive naturalness.
A larger candidate count gives more direct help to utility.
$M{=}32$ provides a strong per-step default within the ablation sweep.
The final held-out DRAKES Q row uses a larger budget, $M{=}52$ with $K{=}8$, for the no-budget-limit quality comparison in \cref{tab:results_drakes}.

\subsection{Completion Count and Matched-state Proxy Diagnostics}
\label{sec:ablation:proxy}

Completion count $n_h$ and candidate count $M$ address different aspects of
sampling: $M$ controls candidate-state coverage, whereas $n_h$ controls the
number of prediction-head completions averaged for each candidate in
\eqref{eq:multi-completion}. We reserve $N$ for sequence length.
\Cref{tab:diagnostic_completions} reports the full sampler with five seeds
and $3{,}200$ sequences per setting. Increasing $n_h$ from $1$ to $2$ raises
final guidance reward by $0.492$, at $41\%$ additional wall time; $n_h=3$
raises reward by $0.706$, at $81\%$ additional wall time. We retain $n_h=1$
as the practical compute--quality default. More completions reduce sampling
variance for the head expectation, but do not remove its factorization or
temperature mismatch with sequential rollouts. The completion-count sweep therefore measures a practical quality--cost
trade-off while retaining the same factorized head approximation.

% BEGIN INLINE tables/diagnostic_completions
\begin{table}[ht]
  \centering
  \caption{\textbf{Completion-count ablation on HepG2}. We use five seeds and $3{,}200$ sequences per setting. The score is final guidance reward $R$, not held-out Pred-Activity. Runtime increases are relative to $n_h=1$.}
  \label{tab:diagnostic_completions}
  \begin{tabular}{ccc}
    \toprule
    Completions $n_h$ & Final guidance reward $R$ & Wall-time increase \\
    \midrule
    $1$ & $5.272\pm0.057$ & baseline \\
    $2$ & $5.764\pm0.033$ & $41\%$ \\
    $3$ & $5.978\pm0.026$ & $81\%$ \\
    \bottomrule
  \end{tabular}
\end{table}
% END INLINE tables/diagnostic_completions

To examine the completion law, we save $640$ matched partial states at each
of three stages from $128$ DRAKES HepG2 source trajectories per seed over
five seeds. From each state, we independently generate $32$ direct
prediction-head completions and $32$ sequential reverse-process
continuations. For each route, we compute the $h$-value corresponding to
each completion's reward and average these values over the $32$ completions.
\Cref{tab:diagnostic_proxy} summarizes the pooled distributions of these
per-state averages. The fixed-base fraction is the percentage of nucleotides
already revealed in the partial state. W1 is Wasserstein-1 distance; KS is
the maximum absolute difference between the two empirical cumulative
distribution functions, not a significance-test result.

% BEGIN INLINE tables/diagnostic_proxy
\begin{table}[ht]
  \centering
  \caption{\textbf{Matched-state completion diagnostic.} At each stage, $640$ partial states come from $128$ source trajectories per seed over five seeds. Each state is independently completed $32$ times by the direct prediction head and $32$ times by sequential continuation. Moments and distances summarize the pooled distributions of per-state average $h$-values; parentheses denote variance, not uncertainty.}
  \label{tab:diagnostic_proxy}
  \small
  \begin{tabular}{ccccc}
    \toprule
    Fixed bases & Direct: mean (variance) & Rollout: mean (variance) & W1 & KS \\
    \midrule
    $0.78\%$ & $3.494\ (0.974)$ & $3.480\ (1.097)$ & $0.060$ & $0.045$ \\
    $33.39\%$ & $3.536\ (1.241)$ & $3.611\ (1.109)$ & $0.101$ & $0.050$ \\
    $66.34\%$ & $3.648\ (1.685)$ & $3.772\ (1.620)$ & $0.124$ & $0.058$ \\
    \bottomrule
  \end{tabular}
\end{table}
% END INLINE tables/diagnostic_proxy

The pooled means are similar, while W1 increases from $0.060$ at $0.78\%$
fixed bases to $0.124$ at $66.34\%$. KS remains between $0.045$ and $0.058$.
Thus the direct head gives a useful low-cost marginal approximation in this
diagnostic, with residual distributional shift that grows across the sampled
stages. These pooled summaries do not imply equality of the conditional
completion laws or exact sampling from the reward-tilted target.

\subsection{Soft Selection versus Argmax}
\label{sec:ablation:argmax}

\Cref{tab:diagnostic_selection} compares soft selection in
\eqref{eq:tilted-resample} with direct argmax using $M=8$, $\tau=3$, $K=4$,
and $\beta=0.2$, over five seeds and $3{,}200$ sequences per setting.
This diagnostic is separate from the high-budget Q operating point.
As $\omega$ rises from $1$ to $8$, mean ESS falls from $1.877$ to $1.143$
and top-candidate selection rises from $77.53\%$ to $95.72\%$.
At $\omega=1$, soft selection has lower final guidance reward than argmax
($4.894\pm0.055$ versus $5.154\pm0.027$); at $\omega=8$, it has higher
reward ($5.237\pm0.051$). The behavior is therefore close to stochastic
best-of-$M$ selection, rather than broad resampling, but not identical to
argmax. The remaining $4.28\%$ non-top selections at $\omega=8$ can change
subsequent states and candidate pools. Low ESS describes local weight
concentration, not final-sequence diversity or target-distribution accuracy.

% BEGIN INLINE tables/diagnostic_selection
\begin{table}[ht]
  \centering
  \caption{\textbf{Soft selection versus direct argmax on HepG2.} We use $M=8$, $\tau=3$, $K=4$, and $\beta=0.2$, with five seeds and $3{,}200$ sequences per setting. ESS and top-candidate frequency describe guided-step selection; rewards are final guidance reward $R$. Direct argmax does not depend on $\omega$.}
  \label{tab:diagnostic_selection}
  \small
  \setlength{\tabcolsep}{4pt}
  \begin{tabular}{cccccc}
    \toprule
    $\omega$ & Mean ESS & \makecell{Largest\\normalized weight} & \makecell{Top-candidate\\selection rate} & \makecell{Soft\\reward $R$} & \makecell{Argmax\\reward $R$} \\
    \midrule
    $1$ & $1.877\pm0.020$ & $0.775$ & $77.53\%$ & $4.894\pm0.055$ & $5.154\pm0.027$ \\
    $4$ & $1.237\pm0.004$ & $0.927$ & $92.69\%$ & $5.183\pm0.038$ & $5.154\pm0.027$ \\
    $8$ & $1.143\pm0.002$ & $0.957$ & $95.72\%$ & $5.237\pm0.051$ & $5.154\pm0.027$ \\
    \bottomrule
  \end{tabular}
\end{table}
% END INLINE tables/diagnostic_selection

\subsection{Split Time and Reward Trajectories}
\label{sec:ablation:split}

\paragraph{Earlier branching improves reward at greater cost.}
\Cref{tab:diagnostic_split} varies only $s_\spl$, keeping $M=9$, $K=4$, and
$n_h=1$ fixed. Splitting at update $0$ gives final guidance reward
$5.890\pm0.028$ and a best-of-$4$ gain of $1.605\pm0.019$; splitting at
update $105$ gives $5.272\pm0.057$ and $0.959\pm0.020$, respectively.
The earlier split costs $793.6\pm2.2$ seconds per job, compared with
$326.4\pm0.5$ seconds for the late split. Late branching retains a useful
selection gain while sharing more of the trajectory computation, but it
cannot revise tokens already fixed before the split.

% BEGIN INLINE tables/diagnostic_split
\begin{table}[ht]
  \centering
  \caption{\textbf{Controlled split-position ablation with $M=9$, $K=4$, and $n_h=1$ fixed.} Only $s_\spl$ varies. Both final reward and best-of-$4$ selection gain use the guidance reward $R$, not held-out Pred-Activity.}
  \label{tab:diagnostic_split}
  \begin{tabular}{cccc}
    \toprule
    Split update $s_\spl$ & Final guidance reward $R$ & Best-of-$4$ gain & Wall time/job (s) \\
    \midrule
    $0$ & $5.890\pm0.028$ & $1.605\pm0.019$ & $793.6\pm2.2$ \\
    $32$ & $5.855\pm0.035$ & $1.545\pm0.030$ & $653.8\pm1.5$ \\
    $64$ & $5.729\pm0.029$ & $1.435\pm0.019$ & $510.4\pm1.9$ \\
    $105$ & $5.272\pm0.057$ & $0.959\pm0.020$ & $326.4\pm0.5$ \\
    \bottomrule
  \end{tabular}
\end{table}
% END INLINE tables/diagnostic_split

\paragraph{Diversity remains in the unresolved suffix.}
For saved partial states from the frozen reverse process, we sample $32$
independent suffix completions and compute pairwise Hamming distance only
on positions that were still masked (\cref{tab:diagnostic_suffix}).
The distance is $0.710\pm0.010$ with $25.43\%$ of bases fixed, and remains
$0.661\pm0.011$ with $90.38\%$ fixed. These results demonstrate substantial
suffix diversity conditional on a shared prefix, not determinism of early
reveals. Late best-of-$K$ exploits this remaining search space; the complete
D-DOIT--TDS comparison does not isolate branching's contribution.

% BEGIN INLINE tables/diagnostic_suffix
\begin{table}[ht]
  \centering
  \caption{\textbf{Suffix diversity under the frozen reverse process.} For each saved partial sequence, $32$ independent suffix completions are sampled. Hamming distance is the mean pairwise fraction of differing bases restricted to positions that were still masked in the shared partial state.}
  \label{tab:diagnostic_suffix}
  \begin{tabular}{ccc}
    \toprule
    Fixed positions & Still-masked positions & Remaining-position Hamming distance \\
    \midrule
    $25.43\%$ & $74.57\%$ & $0.710\pm0.010$ \\
    $50.38\%$ & $49.62\%$ & $0.689\pm0.010$ \\
    $90.38\%$ & $9.62\%$ & $0.661\pm0.011$ \\
    \bottomrule
  \end{tabular}
\end{table}
% END INLINE tables/diagnostic_suffix

\paragraph{Completion-proxy progress is nonmonotone.}
\Cref{tab:diagnostic_trajectory} scores partial states after filling masked
positions with the frozen model's deterministic argmax predictions.
The held-out Enformer scores are averaged over $640$ sequences per seed
and then over five seeds. They are neither guidance reward $R$ nor the
batch-median Pred-Activity metric. D-DOIT's completion-proxy score rises
from $2.067$ at update $1$ to $2.328$ at update $32$, falls to $1.758$ at
update $96$, and reaches $3.670$ at update $128$; frozen sampling declines
from $2.053$ to $0.371$. This trajectory shows higher endpoint activity
without monotone intermediate progress. An argmax-completion score is not
the rollout $h$-function or a guarantee about future reward.

% BEGIN INLINE tables/diagnostic_trajectory
\begin{table}[ht]
  \centering
  \caption{\textbf{Held-out enformer completion-proxy trajectory.} At each update, masked positions are filled with the frozen model's deterministic argmax predictions before scoring. Scores are averaged over $640$ sequences per seed and then over five seeds, rather than using the batch-median Pred-Activity aggregation.}
  \label{tab:diagnostic_trajectory}
  \begin{tabular}{lccccc}
    \toprule
    Update & $1$ & $32$ & $64$ & $96$ & $128$ \\
    \midrule
    D-DOIT & $2.067$ & $2.328$ & $1.763$ & $1.758$ & $3.670$ \\
    Frozen sampling & $2.053$ & $1.561$ & $1.058$ & $0.716$ & $0.371$ \\
    \bottomrule
  \end{tabular}
\end{table}
% END INLINE tables/diagnostic_trajectory

From update $32$ onward, the argmax-completed D-DOIT sequences are $100\%$
unique, and their mean pairwise Hamming distance increases from $0.726$ to
$0.749$. This diversity measure concerns completed sequences across the
sample batch, unlike the remaining-position conditional diversity above.
The observed late-stage reward improvement is therefore compatible with
maintained diversity in this diagnostic, rather than implying that guidance
universally preserves diversity.

\FloatBarrier
\section{Experimental Details of Protein Inverse Folding}
\label{app:exp:protein}
Protein inverse folding tests amino-acid sequence design conditional on a
target backbone, complementing the regulatory DNA experiments, and the full
cross-method comparison, including all metrics and seed-level SDs, is in
\cref{tab:protein-results-full}. We follow
the protein inverse-folding protocol of DRAKES~\citep{wang2025drakes}, using
the same reward oracle and evaluation metrics. D-DOIT guides the frozen
conditional generator through reward-value queries without updating its
parameters.

\paragraph{Protocol and aggregation.}
We evaluate all $12$ test protein backbones with three independent sampling
seeds and generate $128$ sequences per backbone and seed. The configuration
$M=52$, $K=8$, and $n_h=1$ (one completion per candidate) is fixed before
test-set evaluation, with no tuning after inspecting test results. Each
metric is computed separately for each seed under the DRAKES protocol;
D-DOIT entries report the mean and sample standard deviation across those
three seed-level metrics. The uncertainty describes sampling-seed
variation, not variation across sequences or backbones.

\paragraph{Metrics.}
Pred-ddG is the predicted stability score, with higher values preferred
under the benchmark's sign convention. We report its median and the
percentage of sequences with predicted ddG greater than zero. Structural
fidelity is measured by self-consistency RMSD (scRMSD), comparing the
predicted structure of a generated sequence with its target backbone; we
report the median in \AA\ and the percentage below $2$\,\AA.
Joint success requires both positive predicted ddG and scRMSD below
$2$\,\AA. These are predictor-based measures rather than experimental
stability or folding measurements.

% BEGIN INLINE tables/protein_results
\begin{table}[ht]
  \centering
  \caption{\textbf{Protein inverse folding on the DRAKES benchmark.}
    D-DOIT entries are mean $\pm$ sample SD across seed-level metrics.
    Other rows and uncertainties are from DRAKES Table~2
    \citep{wang2025drakes}, not rerun.
    Bold marks the best mean per column, not statistical significance.}
  \label{tab:protein-results-full}
  \small
  \setlength{\tabcolsep}{4pt}
  \resizebox{\linewidth}{!}{%
    \begin{tabular}{@{}lccccc@{}}
      \toprule
      Method & \makecell{Pred-ddG median $\uparrow$} & \makecell{ddG $>0$ (\%) $\uparrow$} & \makecell{scRMSD median (\AA) $\downarrow$} & \makecell{scRMSD $<2$\,\AA\ (\%) $\uparrow$} & \makecell{Success (\%) $\uparrow$} \\
      \midrule
      Pre-trained & $-0.544\pm0.037$ & $36.6\pm1.0$ & $0.849\pm0.013$ & $90.9\pm0.6$ & $34.4\pm0.5$ \\
      CG & $-0.561\pm0.045$ & $36.9\pm1.1$ & $0.839\pm0.012$ & $90.9\pm0.6$ & $34.7\pm0.9$ \\
      SMC & $0.659\pm0.044$ & $68.5\pm3.1$ & $0.841\pm0.006$ & $93.8\pm0.4$ & $63.6\pm4.0$ \\
      TDS & $0.674\pm0.086$ & $68.2\pm2.4$ & $\mathbf{0.834\pm0.001}$ & $\mathbf{94.4\pm1.2}$ & $62.9\pm2.8$ \\
      CFG & $-1.186\pm0.035$ & $11.0\pm0.4$ & $3.146\pm0.062$ & $29.4\pm1.0$ & $1.3\pm0.4$ \\
      \midrule
      D-DOIT & $\mathbf{0.870\pm0.046}$ & $\mathbf{74.8\pm3.1}$ & $0.872\pm0.009$ & $92.5\pm1.3$ & $\mathbf{68.1\pm3.5}$ \\
      \bottomrule
    \end{tabular}%
  }
\end{table}
% END INLINE tables/protein_results

\paragraph{Results and comparison boundary.}
D-DOIT achieves the highest median Pred-ddG ($0.870\pm0.046$),
ddG-positive fraction ($74.8\pm3.1\%$), and joint success
($68.1\pm3.5\%$) among the six methods. Joint success exceeds the
published SMC and TDS means by $4.5$ and $5.2$ percentage points,
respectively. TDS retains better structural fidelity: median scRMSD is
$0.834$\,\AA\ versus $0.872$\,\AA, and the fraction below $2$\,\AA\ is
$94.4\%$ versus $92.5\%$.
All non-D-DOIT entries are taken directly from DRAKES Table~2 and were not
rerun in this comparison; their reported uncertainties are retained.
The shared benchmark oracle and metrics permit a literature comparison,
not a matched-compute or matched-implementation claim. The results extend the computational evaluation to protein sequence design;
they do not establish experimentally measured stability or folding.

\FloatBarrier
\section{Licenses for Existing Assets}
\label{app:asset-licenses}

We use pre-existing assets for non-commercial academic research and credit their creators through Sections~J and~K. The Masked Discrete Diffusion Model reference implementation of \citet{sahoo2024simple} carries Apache-2.0; the Ctrl-DNA repository of \citet{chen2025ctrldna}, Enformer implementation of \citet{avsec2021effective}, and HyenaDNA repository of \citet{nguyen2023hyenadna} also carry Apache-2.0 licenses.
The HepG2 enhancer corpus from \citet{gosai2023machine} is distributed through the paper's data-availability resources, and the Nature article carries CC BY~4.0. The ENCODE Data Use Policy governs chromatin-accessibility data~\citep{encode2012integrated}; JASPAR~2022~\citep{castro2022jaspar} is open access and its Bioconductor data package is GPL-2.
The public DRAKES repository associated with \citet{wang2025drakes} had no top-level software license in the GitHub metadata at access time. We use DRAKES only as a cited benchmark, baseline, oracle, and checkpoint source for academic evaluation, and do not redistribute its code or checkpoints.

\let\section\appendixoriginalsection

% \clearpage
% \input{checklist}

\end{document}